\newcommand{\mbx}{\mathbf{x}}
\newcommand{\mby}{\mathbf{y}}
\newcommand{\mbz}{\mathbf{z}}
\newcommand{\mbc}{\mathbf{c}}
\newcommand{\met}{\boldsymbol{\epsilon}_{\theta}}
\newtheorem{proposition}{Proposition}
\newtheorem{definition}{Definition}
\newtheorem{remark}{Remark}
\title{Model Already Knows the Best Noise: Bayesian Active Noise Selection via Attention  in Video Diffusion Model}
\author{Kwanyoung Kim$^{1, \dagger}$, Sanghyun Kim$^{2}$,  \\
Department of AI Convergence, Gwangju Institute of Science and Technology (GIST) $^{1}$, \\ Samsung Research$^{2}$ \\ 
{\tt\small k0.kim@gist.ac.kr, sanghn.kim@samsung.com }
}
\begin{document}

\maketitle
\begin{center}
\includegraphics[width=0.9\linewidth]{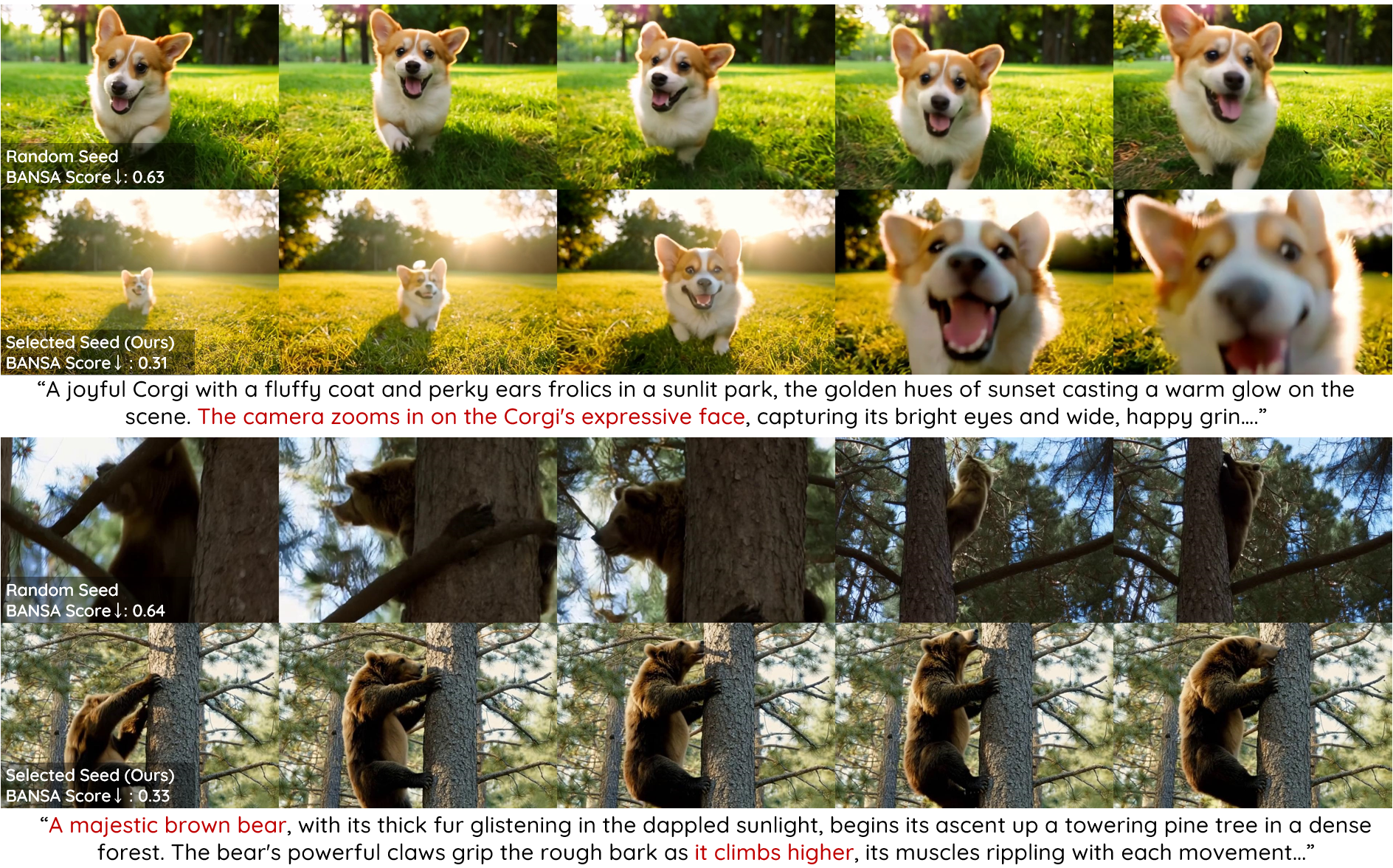}
\vspace{-0.5em}
\captionof{figure}{
\textbf{Random Seed vs. Ours.} We propose ANSE, a noise selection framework, and the BANSA Score, an uncertainty-based metric. By selecting initial noise seeds with lower BANSA scores, which indicate more certain noise samples, ANSE improves video generation performance.}
\label{fig:fig_seed}
\end{center}
\def\thefootnote{$\dagger$}\footnotetext{First and corresponding author, this work was done while the author was at Samsung Research.}
\begin{abstract}
The choice of initial noise strongly affects quality and prompt alignment in video diffusion; different seeds for the same prompt can yield drastically different results. While recent methods use externally designed priors (e.g., frequency filtering or inter-frame smoothing), they often overlook internal model signals that indicate inherently preferable seeds.
To address this, we propose \textbf{ANSE} (\textbf{A}ctive \textbf{N}oise \textbf{S}election for G\textbf{e}neration), a model-aware framework that selects high-quality seeds by quantifying attention-based uncertainty. At its core is \textbf{BANSA} (Bayesian Active Noise Selection via Attention), an acquisition function that measures entropy disagreement across multiple stochastic attention samples to estimate model confidence and consistency.
For efficient inference-time deployment, we introduce a Bernoulli-masked approximation of BANSA that estimates scores from a single diffusion step and a subset of informative attention layers. Experiments across diverse text-to-video backbones demonstrate improved video quality and temporal coherence with marginal inference overhead, providing a principled and generalizable approach to noise selection in video diffusion. See our project page: \url{https://anse-project.github.io/anse-project/}.
\end{abstract}

\begin{figure}[t!]
\centering
\includegraphics[width=0.9\linewidth]{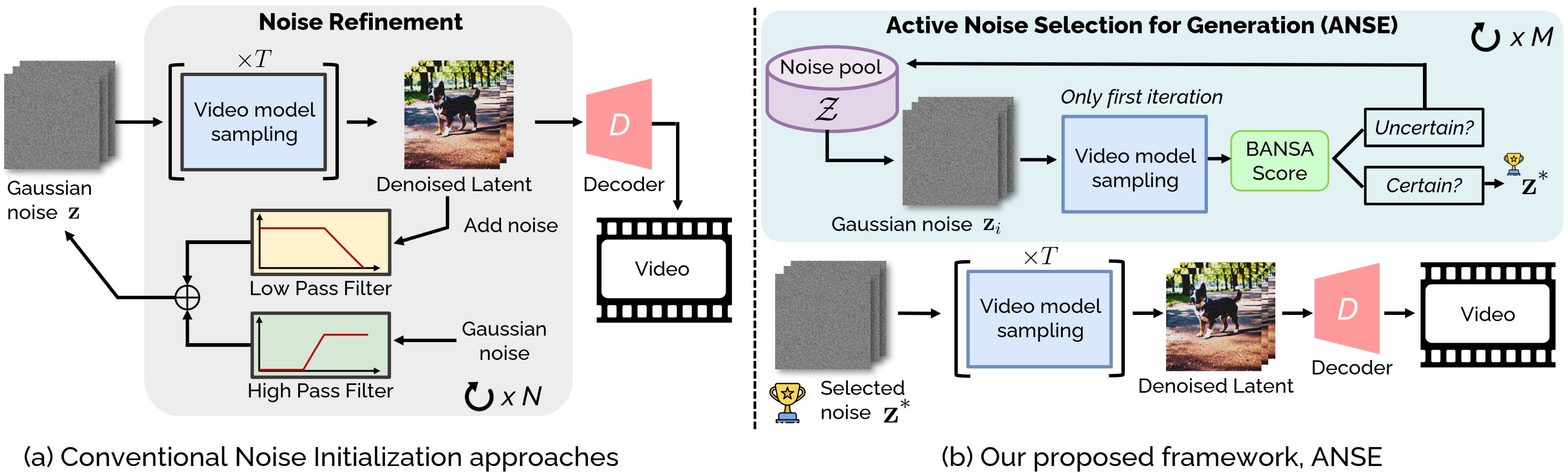}
\caption{
\textbf{Conceptual comparison of noise initialization.}  
\textbf{(a)} Prior methods~\citep{freeinit,freqprior} refine noise via frequency priors and full diffusion sampling, leading to high cost.  
\textbf{(b)} Our method instead selects noise seeds by estimating attention-based uncertainty at the first denoising step, enabling efficient, model-aware selection.
}\label{fig:fig_concept}
\vspace{-1.5em}
\end{figure}

\section{Introduction}

Diffusion models have rapidly established themselves as a powerful class of generative models, demonstrating state-of-the-art performance across images and videos~\citep{stablediffusion,stablediffusion3,sana,pixart2,videocrafter2,wang2025lavie,cogvideox,opensora,wan2.1}. In particular, Text-to-Video (T2V) diffusion models have received increasing attention for their ability to generate temporally coherent and visually rich video sequences. To achieve this, most T2V model architectures extend Text-to-Image (T2I) diffusion backbones by incorporating temporal modules or motion-aware attention layers~\citep{videoldm,he2022lvdm,modelscope,videocrafter1,videocrafter2,animatediff,wang2025lavie}. Furthermore, other works explore video generative structures, such as causal autoencoders or video autoencoder-based models, which aim to generate full video volumes rather than a sequence of independent frames~\citep{cogvideo,cogvideox,ltxvideo,hunyuanvideo,opensora,wan2.1}.

Beyond architectural design, another promising direction lies in improving noise initialization at inference time for T2I and T2V generation~\citep{guo2024initno,eyring2024reno,chen2024tino,xu2025good}. This aligns with the growing trend of inference-time scaling, observed not only in Large Language Models~\citep{brown2024large,snell2024scaling} but also in diffusion-based generation systems~\citep{ma2025inference}. Due to the iterative nature of the diffusion process, the choice of initial noise profoundly influences video quality, temporal consistency, and prompt alignment~\citep{pyoco,freeinit,freenoise,freqprior}. As illustrated in Figure~\ref{fig:fig_seed}, the same prompt can lead to drastically different videos depending solely on the noise seed, motivating the need for intelligent noise selection.

Recent approaches tackle this by introducing external noise priors.  
PYoCo~\citep{pyoco} enforces inter-frame dependent patterns for coherence but requires heavy fine-tuning.  
FreeNoise~\citep{freenoise} reschedules noise across time with a fusion strategy, FreeInit~\citep{freeinit} preserves low-frequency components via frequency filtering, and FreqPrior~\citep{freqprior} extends this with Gaussian-shaped priors and partial sampling. 
While effective, these methods rely on external priors and repeated full diffusion passes, while ignoring internal model signals that identify inherently preferable seeds.

To address this limitation, we propose a model-aware noise selection framework, \textbf{ANSE} (\textbf{A}ctive \textbf{N}oise \textbf{S}election for G\textbf{e}neration), grounded in Bayesian uncertainty. At the core of ANSE is \textbf{BANSA} (\textbf{B}ayesian \textbf{A}ctive \textbf{N}oise \textbf{S}election via \textbf{A}ttention), an acquisition function that identifies noise seeds inducing confident and consistent attention behaviors under stochastic perturbations. A conceptual comparison between our method and prior frequency-based approaches is illustrated in Figure~\ref{fig:fig_concept}, highlighting the difference between external priors and model-informed uncertainty estimates.

Unlike BALD~\citep{bald}, which estimates uncertainty from classification logits, BANSA measures entropy in attention maps, arguably the most informative signals in diffusion. It compares the mean of per-pass entropies with the entropy of the mean map, capturing both uncertainty and cross-pass disagreement. A lower BANSA score indicates more confident, consistent attention and empirically correlates with more coherent video generation (Fig.~\ref{fig:fig_seed}). To make BANSA inference-friendly, we approximate it with Bernoulli-masked attention, yielding multiple stochastic attention samples from a single forward pass. We further cut cost by evaluating early denoising steps and only a subset of informative layers selected via correlation analysis. Our contributions are threefold:
\begin{itemize}
    \item We present \textbf{ANSE}, the first active noise selection framework for video diffusion, grounded in a Bayesian formulation of attention-based uncertainty.
    \item We introduce \textbf{BANSA}, an acquisition function that measures attention consistency under stochastic perturbations, enabling model-aware noise selection without retraining.
    \item Our method improves video quality and temporal consistency across diverse text-to-video backbones, with only marginal inference overhead.
\end{itemize}

\section{Preliminary}
\paragraph{Video Diffusion Models} Diffusion models~\citep{ho2020denoising,song2021} have achieved strong results across generative tasks. In T2V, pixel-space diffusion is costly, so most video diffusion models adopt latent diffusion and operate in a compressed latent space. A video autoencoder with encoder $\mathcal{E}$ and decoder $\mathcal{D}$ reconstructs $\mbx$ as $\mbx=\mathcal{D}(\mathcal{E}(\mbx))$. Let $\mbz_0=\mathcal{E}(\mbx)$. The forward diffusion then progressively adds noise over time:
\[
\mbz_t = \sqrt{\bar{\alpha}_t} \mbz_0 + \sqrt{1 - \bar{\alpha}_t} \, \boldsymbol{\epsilon}, \quad \boldsymbol{\epsilon} \sim \mathcal{N}(0, \mathbf{I}), \quad t = 1, \dots, T,
\]
where $\bar{\alpha}_t$ is a pre-defined variance schedule.
To learn the reverse process, a denoising network $\met$ is trained using the denoising score matching loss~\citep{vincent2011connection}:
\[
\mathcal{L}_\theta = \mathbb{E}_{\mbz_t, \boldsymbol{\epsilon}, t} \left[ \left\| \met(\mbz_t, \mbc, t) - \boldsymbol{\epsilon} \right\|^2 \right],
\]
where $\mbc$ is the conditioning text. Sampling starts from Gaussian noise $\mbz_T\!\sim\!\mathcal{N}(0,\mathbf{I})$ and uses a deterministic DDIM solver~\citep{song2021ddim}. Each step updates as:
\[
\mbz_{t-1} = \sqrt{\bar{\alpha}_t} \, \hat{\mbz}_0(t) + \sqrt{1 - \bar{\alpha}_{t-1}} \, \met(\mbz_t, \mbc, t),
\]
where the denoised latent estimate $\hat{\mbz}_0(t):= \frac{ \mbz_t - \sqrt{1 - \bar{\alpha}_t} \, \met(\mbz_t, \mbc, t)}{\sqrt{\bar{\alpha}_t}}$ is obtained using Tweedie’s formula~\citep{efron2011tweedie,kim2021noise2score}. This iterative process continues until $t = 1$, yielding the final denoised latent $\mbz_0$, which is decoded into a video via $\mathcal{D}$.

\paragraph{Bayesian Active Learning by Disagreement (BALD)} 
Active learning selects the most informative samples from an unlabeled pool to improve training. Acquisition functions are commonly uncertainty-based~\citep{bayesian,bald,batchbald,yoo2019learning} or distribution-based~\citep{mac2014hierarchical,yang2015multi,sener2017active,sinha2019variational}, and some use external modules such as auxiliary predictors~\citep{yoo2019learning,tran2019bayesian,kim2021task}. While most prior work targets image classification, we adapt uncertainty-based acquisition to text-to-video generation without additional models.

Predictive entropy is widely used, but it reflects data noise and does not isolate parameter uncertainty. BALD instead measures \textit{epistemic} uncertainty via the mutual information between predictions $\mby$ and parameters $\theta$:
\begin{align}
\mathrm{BALD}(\mbx) = \mathcal{H}[p(\mby|\mbx)] - \mathbb{E}_{p(\theta|\mathcal{D}_U)} \left[ \mathcal{H}[p(\mby|\mbx,\theta)] \right],
\label{eq:bald}
\end{align}
where $\mathcal{H}[p] = -\sum_y p(y)\log p(y)$ is Shannon entropy~\citep{shannon1948mathematical}. The first term is the entropy of the mean prediction, and the second is the average entropy across stochastic forward passes. A high BALD score means predictions are confident yet disagree, indicating high epistemic uncertainty.
Since the posterior over $\theta$ is intractable, BALD is approximated using $K$ stochastic forward passes (e.g., Monte Carlo dropout):
\begin{align}
\widehat{\mathrm{BALD}}(\mbx) = \mathcal{H}\left[ \frac{1}{K} \sum_{k=1}^{K} p^{(k)}(\mby|\mbx) \right] - \frac{1}{K} \sum_{k=1}^{K} \mathcal{H}\left[ p^{(k)}(\mby|\mbx) \right].
\label{eq:bald-approx}
\end{align}

We reinterpret BALD for inference-time generative modeling. Rather than selecting samples for labeling, we apply BALD to rank noise seeds by their epistemic uncertainty. Selecting seeds with lower BALD scores results in more stable model behavior and leads to higher-quality generations.

\begin{figure}[t!]
\centering
\includegraphics[width=0.7\linewidth]{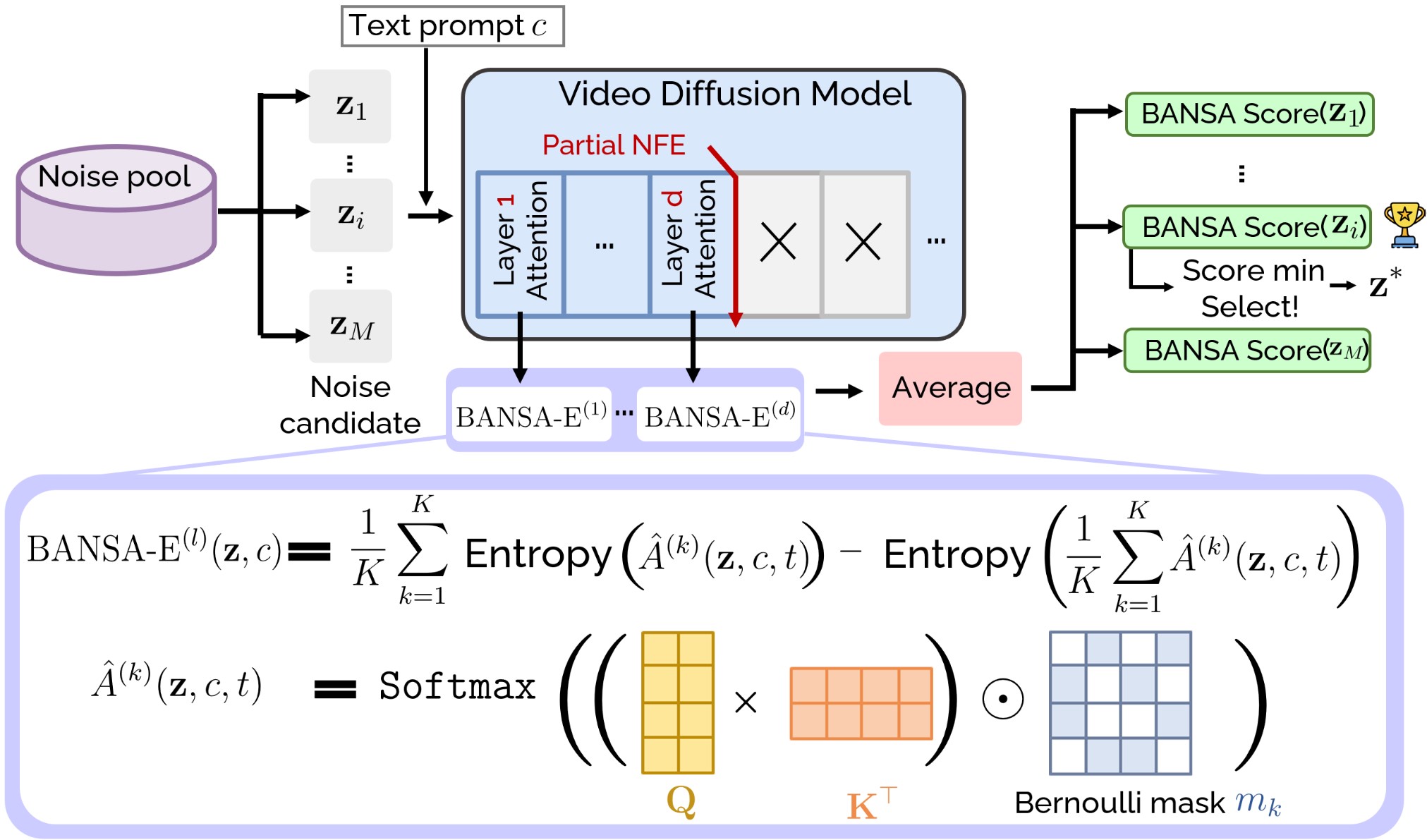}
\vspace{-0.5em}
\caption{
\textbf{Overview of our BANSA-based noise selection process.}  
Given a text prompt $c$, we compute BANSA scores for multiple noise seeds $\{\mbz_1, \dots, \mbz_M\}$ using Bernoulli-masked attention maps from selected layers at an early diffusion step. The seed with the lowest score, indicating confident and consistent attention, is selected for generation.
}

\label{fig:fig_bansa}
\vspace{-1em}
\end{figure}

\section{Methods}

We propose \textbf{ANSE}, a framework for selecting high-quality noise seeds in T2V diffusion based on model uncertainty (Fig.~\ref{fig:fig_concept}). ANSE centers on \textbf{BANSA}, an acquisition function that transfers uncertainty criteria from classification to the attention space of generative diffusion models (Sec.~\ref{sec:bansa}). For efficient inference, we approximate BANSA via Bernoulli-masked attention sampling (Sec.~\ref{sec:stochastic_bansa}). To avoid redundancy, we select a representative attention layer using correlation-based linear probing (Sec.~\ref{sec:subset}). The full pipeline appears in Fig.~\ref{fig:fig_bansa}.

\subsection{BANSA: Bayesian Active Noise Selection via Attention}
\label{sec:bansa}

We introduce \textbf{BANSA}, an acquisition function for selecting optimal noise seeds in T2V diffusion. Unlike classifiers with explicit predictive distributions, diffusion models do not expose such outputs, so we estimate uncertainty in the attention space where text and visual tokens align during generation. We treat attention maps as stochastic predictions conditioned on the seed $\mbz$, prompt $\mbc$, and timestep $t$. BANSA measures both disagreement and confidence across multiple attention samples, providing a BALD-style uncertainty criterion tailored to the generative setting.

\begin{definition}[BANSA Score]
Let $\mbz$ be a noise seed, $c$ a text prompt, and $t$ a diffusion timestep. Let $\mathbf{Q}(\mbz, c, t), \mathbf{K}(\mbz, c, t) \in \mathbb{R}^{N \times d}$ denote the query and key matrices from a denoising network $\epsilon_\theta$. The attention map is computed as:
\begin{equation}
A(\mbz, c, t) := \texttt{Softmax}\left( \mathbf{Q}(\mbz, c, t)\, \mathbf{K}(\mbz, c, t)^\top \right) \in \mathbb{R}^{N \times N}.
\end{equation}

Let $\mathcal{A}(\mbz, c, t) = \{A^{(1)}, \dots, A^{(K)}\}$ denote a set of $K$ stochastic attention maps obtained via forward passes with random perturbations (e.g., Bernoulli masking). The \textbf{BANSA score} is defined as:
\begin{equation}
\text{BANSA}(\mbz, c, t) :=  \mathcal{H}\left( \frac{1}{K} \sum_{k=1}^{K} A^{(k)} \right) - \frac{1}{K} \sum_{k=1}^{K} \mathcal{H}(A^{(k)}),
\end{equation}
\end{definition}
where $\mathcal{H}(A) := \frac{1}{N} \sum_{i=1}^{N} \sum_{j=1}^{N} -A_{ij} \log A_{ij}.$
This formulation captures both the sharpness (confidence) and the consistency (agreement) of attention behavior. BANSA can be applied to various attention types (e.g., cross-, self-, or temporal) and allows layer-wise interpretability.

Given a noise pool $\mathcal{Z} = \{\mbz_1, \dots, \mbz_M\}$, we select the optimal noise seed that minimizes score:
\begin{equation}
\mbz^* := \arg \min_{\mbz \in \mathcal{Z}} \text{BANSA}(\mbz, c, t).
\end{equation}
A desirable property of BANSA is that its score becomes zero when all attention samples are identical, reflecting complete agreement and certainty. We formalize this as follows:
\begin{proposition}[BANSA Zero Condition]
\label{prop:bansa-zero}
Let $\mathcal{A}(\mbz, c, t) = \{A^{(1)}, \dots, A^{(K)}\}$ be a set of row-stochastic attention maps. Then:
\[
\text{BANSA}(\mbz, c, t) = 0 \quad \Leftrightarrow \quad A^{(1)} = \cdots = A^{(K)}.
\]
\end{proposition}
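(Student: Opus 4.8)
The plan is to recognize $\text{BANSA}$ as the nonnegative Jensen gap of a strictly concave functional, so that both directions follow from the equality case of Jensen's inequality. The key observation is that $\mathcal{H}(A) = \frac{1}{N}\sum_{i,j} h(A_{ij})$ with $h(x) := -x\log x$, and $h$ is a single strictly concave scalar function applied entrywise. The reverse implication ($\Leftarrow$) is immediate: if $A^{(1)} = \cdots = A^{(K)}$, then the average map $\bar A := \frac{1}{K}\sum_k A^{(k)}$ equals each $A^{(k)}$, so $\mathcal{H}(\bar A) = \frac{1}{K}\sum_k \mathcal{H}(A^{(k)})$ and the score vanishes. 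The content is in the forward implication.

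First I would establish strict concavity of $h$ on $[0,\infty)$. On $(0,\infty)$ this follows from $h''(x) = -1/x < 0$, and continuity at $0$ (with the convention $0\log 0 = 0$) extends strict concavity to the closed ray; since every entry of a row-stochastic matrix lies in $[0,1]$, all arguments stay in this domain. Applying the finite-point Jensen inequality to $h$ at each fixed index pair $(i,j)$ gives
\[
h\!\left(\frac{1}{K}\sum_{k=1}^K A^{(k)}_{ij}\right) \;\geq\; \frac{1}{K}\sum_{k=1}^K h\!\left(A^{(k)}_{ij}\right),
\]
and summing over all $(i,j)$ and dividing by $N$ yields $\mathcal{H}(\bar A) \geq \frac{1}{K}\sum_k \mathcal{H}(A^{(k)})$, i.e. $\text{BANSA}(\mbz, c, t) \geq 0$.

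The crux is the equality analysis. Writing $\text{BANSA}(\mbz, c, t) = \frac{1}{N}\sum_{i,j} \Delta_{ij}$, where $\Delta_{ij} := h(\bar A_{ij}) - \frac{1}{K}\sum_k h(A^{(k)}_{ij}) \geq 0$ is the per-entry Jensen gap, a sum of nonnegative terms is zero if and only if every term is zero. By the strict concavity of $h$, the finite-point Jensen gap $\Delta_{ij}$ vanishes precisely when $A^{(1)}_{ij} = \cdots = A^{(K)}_{ij}$. Hence $\text{BANSA} = 0$ forces this equality simultaneously at every index pair $(i,j)$, which is exactly the statement $A^{(1)} = \cdots = A^{(K)}$.

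The only delicate point, and the step I would take care to justify, is the equality case of multi-point Jensen under strict concavity together with its validity at the boundary value $x = 0$; everything else is routine bookkeeping of summing nonnegative entrywise gaps. Row-stochasticity is used only to guarantee that all entries lie in $[0,1]$, the domain on which the strict concavity argument is clean.
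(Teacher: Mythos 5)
Your proof is correct and follows essentially the same route as the paper's: both reduce the claim to Jensen's inequality for a strictly concave entropy functional, with the equality case forcing all attention maps to coincide. Your entrywise decomposition into per-index Jensen gaps of $h(x)=-x\log x$ (including the check at the boundary value $x=0$) simply spells out the equality analysis that the paper asserts in one line via strict concavity of Shannon entropy on the simplex.
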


% \begin{proof}
% See Appendix.
% % BANSA is defined as the difference between the average entropy and the entropy of the average:
% % \[
% % \text{BANSA}(\mbz, c, t) = \frac{1}{K} \sum_{k=1}^{K} \mathcal{H}(A^{(k)}(\mbz, c, t)) - \mathcal{H}\left( \frac{1}{K} \sum_{k=1}^{K} A^{(k)}(\mbz, c, t) \right).
% % \]
% % Since the Shannon entropy $\mathcal{H}(\cdot)$ is concave, Jensen’s inequality implies:
% % \[
% % \mathcal{H}\left( \frac{1}{K} \sum_{k=1}^{K} A^{(k)} \right) \geq \frac{1}{K} \sum_{k=1}^{K} \mathcal{H}(A^{(k)}),
% % \]
% % with equality if and only if $A^{(1)} = \cdots = A^{(K)}$. Thus, $\text{BANSA}(\mbz, c, t) = 0$ if and only if all attention maps are identical.
% \end{proof}
The proof is deferred to the Appendix~\ref{sec:bansa_proof}. This result implies that minimizing the BANSA score encourages attention that is both confident and consistent under stochastic perturbations. Empirically, low BANSA correlates with better prompt alignment, temporal coherence, and visual fidelity. Thus, BANSA provides a principled, model-aware criterion for noise selection in T2V.

% While BANSA provides a principled objective for noise selection, its computation requires $K$ independent forward passes per noise seed $\mbz$, which is computationally expensive. To mitigate this cost, we propose a stochastic approximation using Bernoulli-masked attention, enabling multiple attention samples from a single pass.

% Instead of computing stochastic attention maps from $K$ separate forward passes such as equipping dropout, we inject stochasticity directly into the attention computation by applying binary masks to the attention scores.
% For each sample iteration $k =1 ,\cdots,K$, we generate a binary mask $m_k \in \{0,1\}^{N \times N}$ where each element is drawn i.i.d. from $\text{Bernoulli}(p)$. The masked attention map is computed as:

\subsection{Stochastic Approximation of BANSA via Bernoulli-Masked Attention}
\label{sec:stochastic_bansa}

While BANSA is principled for noise selection, computing it with $K$ independent passes per seed $\mbz$ is costly. We instead draw $K$ stochastic attention samples from a \emph{single} pass via Bernoulli-masked attention.
Rather than running $K$ passes, we inject stochasticity directly into the attention scores. For each $k\!=\!1,\dots,K$, sample a binary mask $m_k \in \{0,1\}^{N \times N}$ i.i.d. from $\mathrm{Bernoulli}(p)$ and compute
masked attention map, $
\hat{A}^{(k)}(\mbz, c, t) := {A}(\mbz, c, t) \odot m_k,
$
$\odot$ denotes element-wise multiplication and rows are re-normalized. Using these $K$ such samples, we define the approximate BANSA:
\begin{equation}
{\text{BANSA-E}}(\mbz, c, t) :=  \mathcal{H}\left( \frac{1}{K} \sum_{k=1}^{K} \hat{A}^{(k)}(\mbz, c, t) \right) - \frac{1}{K} \sum_{k=1}^{K} \mathcal{H}(\hat{A}^{(k)}(\mbz, c, t)).
\end{equation}

By concavity of entropy, $\text{BANSA-E}\ge 0$. Although approximate, it efficiently captures attention-level epistemic uncertainty and serves as a practical surrogate for model-aware noise selection. \footnote{We use “Bayesian” in the sense of epistemic uncertainty estimation, following BALD~\citep{bayesian}, rather than a full posterior derivation.} As shown in Table~\ref{tab:performance}, experimental validation confirms that this method is sufficient for selecting optimal noisy samples from the model's perspective.

% \subsection{Stochastic Approximation of BANSA via Bernoulli-Masked Attention}
% \label{sec:stochastic_bansa}

% While BANSA is principled for noise selection, computing it with $K$ independent passes per seed $\mbz$ is costly. We instead draw $K$ stochastic attention samples from a \emph{single} backbone pass via Bernoulli-masked attention.

% Let $\mathbf{S}=\mathbf{Q}(\mbz,\mbc,t)\mathbf{K}(\mbz,\mbc,t)^\top/\sqrt{d}$ and $\mathbf{A}=\mathrm{softmax}(\mathbf{S})$.  
% For each $k=1,\dots,K$, sample a binary mask $\mathbf{M}_k\in\{0,1\}^{N\times N}$ i.i.d. from $\mathrm{Bernoulli}(p)$ and form a masked map
% \[
% \hat{\mathbf{A}}^{(k)} = \mathbf{A}\odot \mathbf{M}_k,\qquad
% \tilde{\mathbf{A}}^{(k)}_{i:}=\frac{\hat{\mathbf{A}}^{(k)}_{i:}}{\mathbf{1}^\top \hat{\mathbf{A}}^{(k)}_{i:}+\varepsilon}\,,
% \]
% where $\odot$ denotes element-wise multiplication and rows are re-normalized (with a small $\varepsilon$; if a row is all zeros, we fall back to the unmasked row or a uniform distribution).

% Using these $K$ samples, we define the approximate BANSA as
% \[
% \text{BANSA-E}(\mbz,\mbc,t)=
% \mathcal{H}\!\left(\frac{1}{K}\sum_{k=1}^{K}\tilde{\mathbf{A}}^{(k)}\right)
% -\frac{1}{K}\sum_{k=1}^{K}\mathcal{H}\!\left(\tilde{\mathbf{A}}^{(k)}\right).
% \]
% By concavity of entropy, $\text{BANSA-E}\ge 0$. Although approximate, it efficiently captures attention-level epistemic uncertainty and serves as a practical surrogate for model-aware noise selection.

\begin{algorithm}[t]
\caption{Active Noise Selection with BANSA Score for Video Generation}
\label{alg:bansa-select}
\KwIn{Text prompt $c$, noise pool $\mathcal{Z} = \{\mbz_1, \dots, \mbz_M\}$, timestep $t$, cutoff layer $d^*$}
% \KwOut{Generated video $\hat{v}$}
\ForEach{$\mbz_i \in \mathcal{Z}$}{
  Compute BANSA score: ${\widehat{\text{BANSA-E}}}_{\leq d^*}(\mbz_i,c,t)$ via Eq.~\eqref{eq:cumulative-bansa}\;
}
Select optimal noise: $\mbz^* = \arg\min_{\mbz_i} {\widehat{\text{BANSA-E}}}_{\leq d^*}(\mbz_i,c,t)$\;
\Return{Generate video: $\hat{v} = \texttt{SampleVideo}(\mbz^*, c, t)$\;}
\end{algorithm}

\subsection{Layer Selection via Cumulative BANSA Correlation}
\label{sec:subset}

BANSA can be computed at any layer, but behavior differs across depth. Using all layers is costly, so we truncate at the smallest depth $d^*$ where the average BANSA up to $d$ remains highly correlated with the full-layer score.
Given a noise seed $\mbz_i \in \mathcal{Z} = \{\mbz_1, \dots, \mbz_M\}$ and $L$ attention layers, we compute per-layer scores ${\text{BANSA-E}}^{(l)}(\mbz_i, c, t)$ and define the cumulative average up to layer $d$ as:
\begin{align}
{\widehat{\text{BANSA-E}}}_{\leq d}(\mbz_i, c, t) := \frac{1}{d} \sum_{l=1}^{d} {\text{BANSA-E}}^{(l)}(\mbz_i, c, t).
\label{eq:cumulative-bansa}
\vspace{-0.5em}
\end{align}
To find $d^*$, we compute the Pearson correlation~\citep{pearson1895note} between the partial average ${\widehat{\text{BANSA-E}}}_{\leq d}$ and the full-layer average ${\widehat{\text{BANSA-E}}}_{\leq L}$, and choose the smallest $d$ such that  
$
\text{Corr}\!\left({\widehat{\text{BANSA-E}}}_{\leq d},\,{\widehat{\text{BANSA-E}}}_{\leq L}\right) \ge \tau.
$  We validate this procedure using 100 prompts and 10 noise seeds across all evaluated models as detailed in Appendix~\ref{sec:correlation_layer}. As shown in Figure~\ref{fig:layer_corr}, the correlation quickly stabilizes at a moderate depth, allowing us to select $d^*$ without using all layers. We then define the BANSA score as ${\widehat{\text{BANSA-E}}}_{\leq d^*}$ to guide noise selection, as summarized in Algorithm~\ref{alg:bansa-select}.
% This procedure yields a lightweight, model-specific approximation of the full BANSA score. 
With $d^*$ fixed per model, it adds no runtime cost, and ${\widehat{\text{BANSA-E}}}_{\leq d^*}$ (truncated layer) consistently matches the full-layer score and robustly preserves generation quality across models, confirming its reliability as a practical surrogate as shown Appendix~\ref{sec:correlation_layer} and \ref{sec:add_fulllayer} (see detail).

\section{Analysis of BANSA-Selected Noise}
In this section, we aim to investigate the property of noise seeds selected by BANSA, and consist of three experiments which declare our framework and interpret the noise seeds. 
Our framework does not seek a universal ``golden'' seed, but selects the most suitable one for each prompt by minimizing uncertainty. 
The effectiveness of a noise seed is therefore \emph{prompt-dependent}: a seed that improves one prompt may degrade another.

\vspace{-0.5em}
\paragraph{Experiment 1: Cross-Prompt Behavior.}  
\begin{wraptable}{r}{0.48\linewidth}
\vspace{-8pt}
\centering
\caption{Cross-prompt evaluation of a high-BANSA seed from A.}
\label{tab:crossprompt}
\vspace{-0.5em}
\footnotesize
\begin{tabular}{lccc}
\toprule
\textbf{From} & \textbf{To} & \textbf{Subject.} & \textbf{BackGround} \\
\midrule
A & B & 0.8491$\to$0.8703 & 0.9157$\to$0.9538 \\
A & C & 0.7683$\to$0.7213 & 0.9368$\to$0.9169 \\
\bottomrule
\end{tabular}
\vspace{-6pt}
\end{wraptable}
To demonstrate this prompt-dependent behavior, We select a high-BANSA (worst-performing) seed from prompt A and applied it to prompts B and C (Table~\ref{tab:crossprompt}) and measures Subject and Background Consistency.
 The seed degraded quality for prompt C but improved performance for prompt B, confirming that noise effectiveness depends on prompt context rather than intrinsic seed quality.

While Experiment 1 shows how noise effectiveness varies across prompts, the following two experiments examine whether similar patterns also emerge within a \textit{fixed} prompt.
\vspace{-1em}
\paragraph{Experiment 2: Intra-Prompt Attention Consistency.}  
\begin{wraptable}{r}{0.45\linewidth}
\vspace{-8pt}
\centering
\caption{Pairwise distances within attention maps.}
\label{tab:intraprompt}
\vspace{-0.5em}
\footnotesize
\begin{tabular}{lccc}
\toprule
Type & High (intra) & Low (intra) & Cross \\
\midrule
Euclid. & 1.635 & 1.567 & 1.735 \\
\bottomrule
\end{tabular}
\vspace{-6pt}
\end{wraptable} 
To examine whether BANSA uncertainty relates to attention stability, 
we generate five samples per prompt (total 100 prompts), selected the highest- and lowest-scoring seeds, 
and computed pairwise Euclidean distances between their attention maps (Table~\ref{tab:intraprompt}). 
Low-BANSA seeds showed smaller intra-group distances, indicating more stable and consistent patterns, 
while high-BANSA seeds exhibited higher variability.

\vspace{-0.5em}
\paragraph{Experiment 3: Latent Trajectory and Expressiveness.}  
\begin{wraptable}{r}{0.4\linewidth}
\vspace{-8pt}
\centering
\caption{Latent stability and visual dynamics.}
\label{tab:latent}
\vspace{-0.5em}
\footnotesize
\begin{tabular}{lcc}
\toprule
Group & Traj. Var. $\downarrow$ & Var. $\uparrow$ \\
\midrule
High BANSA & 52.34 & 0.041 \\
Low BANSA  & 51.07 & 0.053 \\
\bottomrule
\end{tabular}
\vspace{-6pt}
\end{wraptable}
Beyond attention maps, we further investigate latent-space dynamics.  
Across 100 prompts with five repetitions each, we compare high- and low-BANSA seeds over all denoising steps (Table~\ref{tab:latent}).
We measured two metrics: (1) {Latent Trajectory Variation}, obtained by applying a Butterworth low-pass filter to isolate low-frequency structural components~\citep{freeinit,freqprior} and computing their temporal variation, where lower values indicate smoother and more stable trajectories; and (2) {Intra-frame Variance}, the average spatial variance within frames, where higher values indicate richer dynamic expressiveness. 
Low-BANSA seeds showed both lower trajectory variation and higher intra-frame variance, combining temporal stability with more expressive video generation.

\paragraph{Takeaway and Relation to Prior Work.}
The three findings indicate that BANSA identifies seeds with lower uncertainty, leading to (1) prompt-specific improvements, (2) more stable attention, and (3) smoother yet more expressive latent dynamics. This is consistent with FreeInit~\cite{freeinit} and FreeQPrior~\citep{freqprior}, which emphasize stable latent initialization and low-frequency structure. BANSA complements these insights by offering an uncertainty-driven selector that explains \emph{why} such seeds generalize within a prompt while remaining prompt-dependent across prompts.

\section{Experiments}

\noindent\textbf{Experimental Setting.}  
We evaluate \textbf{ANSE} on diverse T2V diffusion backbones—AnimateDiff~\citep{animatediff}, CogVideoX-2B/5B~\citep{cogvideox}, Wan2.1~\citep{wan2.1}, and HunyuanVideo~\citep{hunyuanvideo}. 
To rigorously evaluate our method, we follow each model’s official sampling protocol; for resolution, Wan2.1 is evaluated at 480p and HunyuanVideo at 360p due to resource limits. 
Results for noise-prior refinement baselines (FreqPrior~\citep{freqprior}) are reported only on AnimateDiff, as they lack official support on the others and incur $\sim\!3\times$ inference time. 
ANSE is orthogonal and can be combined with these priors. 
Unless noted, we use a noise pool of $M{=}10$ with $K{=}10$ stochastic passes per seed, and Bernoulli-masked attention with $p{=}0.2$. 
All experiments run on NVIDIA H100 GPUs.
Further details are in the Appendix~\ref{sec:metric_detail}.

\noindent\textbf{Evaluation Metric.}  
% To evaluate the impact of ANSE, we use {VBench}~\citep{vbench}, a perceptually grounded benchmark for text-to-video generation. 
% VBench reports two high-level metrics---{quality score} and {semantic score}---which are combined into a total score via weighted averaging and normalized to a 0--100 scale. 
% Each score is a composite metric: the quality score is derived from 7 perceptual dimensions including \emph{subject consistency}, \emph{background consistency}, \emph{temporal flickering}, \emph{motion smoothness}, \emph{dynamic degree}, \emph{aesthetic quality}, and \emph{imaging quality}; 
% the semantic score comprises 9 alignment-related criteria such as \emph{object class}, \emph{multiple objects}, \emph{human action}, \emph{color}, \emph{spatial relationship}, \emph{scene}, \emph{temporal style}, \emph{appearance style}, and \emph{overall consistency}. 
% This decomposition ensures that VBench comprehensively assesses both visual fidelity and semantic alignment. 
% For each configuration (with and without BANSA), we generate 4,730 videos to ensure statistical reliability. 
We evaluate with VBench~\citep{vbench}, which reports a quality score and a semantic score, combined into a total score on a 0–100 scale. 
For AnimateDiff and CogVideoX-2B/5B, we report both quality and semantic scores. 
For HunyuanVideo and Wan2.1, we restrict evaluation to the six quality dimensions due to computational constraints, while noting that semantic evaluation can also be applied in principle. 
All reported results are obtained by repeating each evaluation dimension five times and averaging the scores.

\begin{table}[t]
\centering
\caption{
\textbf{Quantitative results on VBench using AnimateDiff, CogVideoX-2B and -5B.}  
}
\label{tab:performance}
\resizebox{0.8\linewidth}{!}{
\begin{small}
\begin{tabular}{c c c c c c}
\toprule
\textbf{ Backbone Model} & \textbf{Method} & \textbf{Quality Score} & \textbf{Semantic Score} & \textbf{Total Score} & \textbf{Inference Time} \\
\cmidrule(r){1-1}\cmidrule(r){2-2}\cmidrule(r){3-6}
\multirow{4}{*}{\makecell[c]{AnimateDiff\\ \citep{animatediff}}}
    & Vanilla         & 80.22 & 69.03 & 77.98 & 28.23s \\
   \cmidrule(r){2-2}\cmidrule(r){3-6}
    & \cellcolor{green!10} + Ours  & \cellcolor{green!10} \bf81.66 & \cellcolor{green!10} \bf71.09 & \cellcolor{green!10} \bf 79.33 & \cellcolor{green!10} 31.33s$_{(+10.98\%)}$ \\
   \cmidrule(r){2-2}\cmidrule(r){3-6}
    & Freqprior         & 81.22 & 70.45 & 79.07 & 58.01$_{(+105.36\%)}$ \\
   \cmidrule(r){2-2}\cmidrule(r){3-6}
    & \cellcolor{green!10} + Ours  & \cellcolor{green!10} \bf82.23 & \cellcolor{green!10} \bf73.23& \cellcolor{green!10} \bf80.43 & \cellcolor{green!10} 61.12s$_{(+5.36\%)}$ \\
\cmidrule(r){1-1}\cmidrule(r){2-2}\cmidrule(r){3-6}
\multirow{2}{*}{\makecell[c]{CogVideoX-2B\\ \citep{cogvideox}}}
    & Vanilla         & 82.08 & 76.83 & 81.03 & 247.8s \\
   \cmidrule(r){2-2}\cmidrule(r){3-6}
    & \cellcolor{green!10} + Ours  & \cellcolor{green!10} \bf82.56 & \cellcolor{green!10} \bf78.06 & \cellcolor{green!10} \bf81.66 & \cellcolor{green!10} 269.3s$_{(+8.67\%)}$ \\
\cmidrule(r){1-1}\cmidrule(r){2-2}\cmidrule(r){3-6}
\multirow{2}{*}{\makecell[c]{CogVideoX-5B\\ \citep{cogvideox}}}
    & Vanilla            & 82.53 & 77.50 & 81.52 & 667.3s \\
    \cmidrule(r){2-2}\cmidrule(r){3-6}
    & \cellcolor{green!10} + Ours & \cellcolor{green!10} \bf82.70 & \cellcolor{green!10} \bf78.10 & \cellcolor{green!10} \bf81.71 & \cellcolor{green!10} 754.1s$_{(+13.1\%)}$ \\
\bottomrule
\end{tabular}
\end{small}
}
\vspace{-1em}
\end{table}

\begin{table}[!t
]
\centering
\caption{\textbf{Quantitative results on quality score metrics for HunyuanVideo and Wan2.1}}
\vspace{-0.5em}
\resizebox{0.8\linewidth}{!}{
\begin{tabular}{l l c c c c c c c }
\toprule
\makecell[c]{\bf Backbone\\ \bf Model} & 
\makecell[c]{\bf Method} & 
\makecell[c]{\bf Subject\\ \bf Consistency $\uparrow$} & 
\makecell[c]{\bf Background\\ \bf Consistency $\uparrow$} & 
\makecell[c]{\bf Motion \\ \bf Smoothness $\uparrow$} & 
\makecell[c]{\bf Aesthetic \\ \bf Quality $\uparrow$} & 
\makecell[c]{\bf Imaging \\ \bf Quality $\uparrow$} &  
\makecell[c]{\bf Temporal \\ \bf Flickering $\uparrow$} &
\makecell[c]{\bf Inference \\ \bf Time $\downarrow$} \\ 
  \cmidrule(r){1-1}\cmidrule(r){2-2}\cmidrule(r){3-8}\cmidrule(r){9-9}
\multirow{2}{*}{\makecell[c]{HunyuanVideo\\ \citep{hunyuanvideo}}} 
  & Vanilla & 0.9562 & 0.9656 &  0.9850 & \bf0.6276 & 0.6137  & 0.9921 & 52.3s  \\
  \cmidrule(r){2-2}\cmidrule(r){3-8} \cmidrule(r){9-9}
&\cellcolor{green!10} + Ours  &\cellcolor{green!10} \bf 0.9612 &\cellcolor{green!10} \bf0.9661 &\cellcolor{green!10} \bf 0.9858 &\cellcolor{green!10} 0.6268 &\cellcolor{green!10} \bf 0.6151 &\cellcolor{green!10} \bf0.9938 & \cellcolor{green!10} 59.8s$_{(+14.34\%)}$\\
\midrule
\multirow{2}{*}{\makecell[c]{Wan2.1\\ \citep{wan2.1}}} & Vanilla & 0.9562 & 0.9656 &  0.9850 & \bf0.6276 & 0.6137  & 0.9943 & 43.8s \\
  \cmidrule(r){2-2}\cmidrule(r){3-8}  \cmidrule(r){9-9}
&\cellcolor{green!10} + Ours  &\cellcolor{green!10} \bf 0.9612 &\cellcolor{green!10} \bf0.9661 &\cellcolor{green!10} \bf 0.9858 &\cellcolor{green!10} 0.6268 &\cellcolor{green!10} \bf 0.6151 & \cellcolor{green!10}\bf 0.9956 &  \cellcolor{green!10} 50.7s$_{(+15.75\%)}$\\
\bottomrule
\end{tabular}
}
\label{tab:sota}
\vspace{-1.5em}
\end{table}

% \begin{table}[h]
% \centering
% \caption{Quantitative results on Wan2.1 (T2V 1.3B) on 480p seeting.}
% \vspace{0.5em}
% \resizebox{\linewidth}{!}{
% \begin{tabular}{l c c c c c c c c c}
% \toprule
% \makecell[c]{Method} & 
% \makecell[c]{Subject\\Consistency $\uparrow$} & 
% \makecell[c]{Background\\Consistency $\uparrow$} & 
% \makecell[c]{Motion \\Smoothness $\uparrow$} & 
% \makecell[c]{Aesthetic \\Quality $\uparrow$} & 
% \makecell[c]{Imaging \\Quality $\uparrow$} &  
% \makecell[c]{Overall \\ Consistency $\uparrow$} & 
% \makecell[c]{Spatial \\ Relationship $\uparrow$} &
% \makecell[c]{Human \\ Action $\uparrow$} \\ 
% \cmidrule(r){1-1}\cmidrule(r){2-9}
% Vanilla & 0.9285 & 0.9539 &  0.9748 & 0.6465 & \bf0.6519  & 0.2724 &  0.7721 & 0.976  \\
% \cmidrule(r){1-1}\cmidrule(r){2-9}
% + Ours  & \bf 0.9310 & \bf0.9598 & \bf 0.9787 & \bf 0.6560 &  0.6516 & \bf 0.2726 &\bf 0.7789 & \bf0.980 \\
% \bottomrule
% \end{tabular}
% }
% \label{tab:trunc_vs_full}
% \end{table}

\begin{figure}[t!]
\centering
\includegraphics[width=0.75\linewidth]{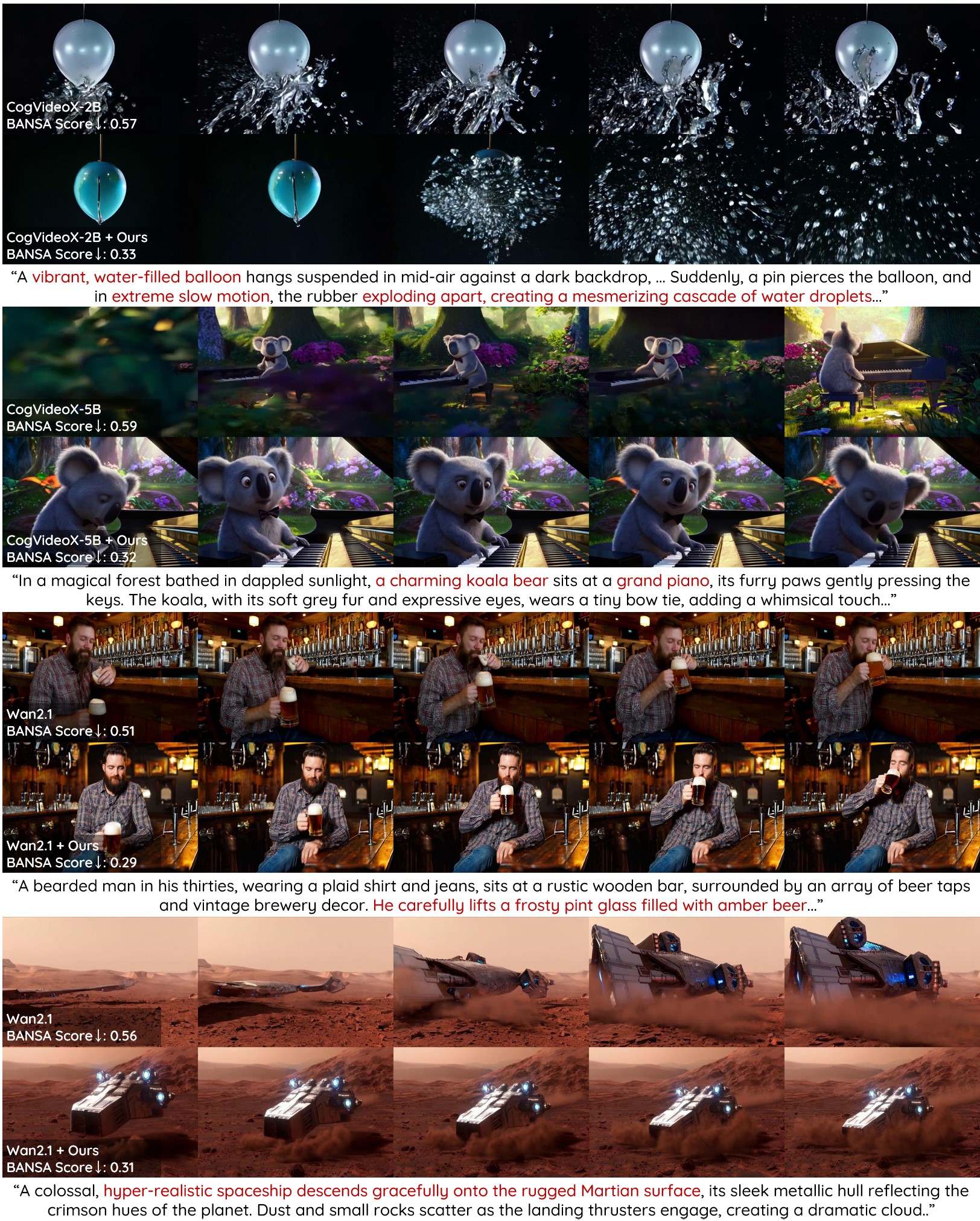}
\vspace{-0.5em}
\caption{\textbf{Qualitative comparison with and without ANSE.} (CogvideoX-2B, 5B and Wan2.1). With ANSE, videos exhibit improved visual quality, better text alignment, and smoother motion transitions compared to the baseline. {White numbers show BANSA scores for the same prompt.}}\label{fig:fig_cog}
\vspace{-1.5em}
\end{figure}

\noindent\textbf{Quantitative Comparison.}  
As shown in Table~\ref{tab:performance}, ANSE consistently improves performance across diverse T2V backbones.  
On AnimateDiff, ANSE outperforms the vanilla baseline and further demonstrates clear advantages over noise-initialization methods such as FreqPrior~\citep{freqprior}.  

Notably, ANSE is also fully compatible with FreqPrior, achieving even higher scores when combined. Table~\ref{tab:performance} also reports results on CogVideoX-2B and -5B, which adopt the more advanced MMDiT architecture.  
ANSE improves quality, semantic, and total VBench scores on both scales, demonstrating robust generalization across architectures and model sizes.  

\begin{wraptable}{r}{0.4\linewidth}
\vspace{-8pt}
\centering
\caption{{\textbf{Comparison with FVMD metrics on MSRVTT dataset.}}}
\vspace{-0.5em}
\footnotesize
\resizebox{0.8\linewidth}{!}{
\begin{tabular}{l c c }
\toprule
\makecell[c]{Backbone\\Model} & 
\makecell[c]{Methods} & 
\makecell[c]{FVMD $\downarrow$} \\ 
\cmidrule(r){1-1}\cmidrule(r){2-2}\cmidrule(r){3-3}
\multirow{2}{*}{\makecell[c]{HunyuanVideo \\ \citep{hunyuanvideo}}} 
 & Vanilla & 17641.19 \\
& \cellcolor{green!10} + Ours & \cellcolor{green!10}\bf 16491.68 \\
\cmidrule(r){1-1}\cmidrule(r){2-2}\cmidrule(r){3-3}
\multirow{2}{*}{\makecell[c]{Wan2.1(1.3B) \\ \citep{wan2.1}}} 
 & Vanilla & 16495.59 \\
& \cellcolor{green!10} + Ours &  \cellcolor{green!10} \bf 14306.19 \\
\bottomrule
\end{tabular}
}
\label{tab:fvmd}
% \vspace{-6pt}
\vspace{1em}
\end{wraptable}

Table~\ref{tab:sota} presents additional results on recent state-of-the-art models, HunyuanVideo and Wan2.1.  
ANSE achieves consistent improvements across quality metrics, underscoring its plug-and-play applicability to a wide spectrum of video diffusion models—from U-Net to MMDiT, and from mid-scale to the latest state-of-the-art systems. {The statistical signicanse analysis of these are provied in Appendix~\ref{sec:static}.}

{We further evaluate motion quality using the Fréchet Video Motion Distance (FVMD)~\citep{fvmd} on MSR-VTT~\citep{xu2016msr}, as shown in Table~\ref{tab:fvmd}. We sample 200 text prompts from the test split and generate videos with ANSE, using the corresponding real videos as the reference distribution. ANSE consistently obtains lower FVMD scores, indicating improved motion fidelity and reinforcing the motion-related gains observed in VBench.}

\noindent\textbf{Qualitative Comparison.}  
Figure~\ref{fig:fig_cog} presents representative qualitative comparisons on CogVideoX-2B, CogVideoX-5B, and Wan2.1 with and without ANSE.  
Our approach consistently enhances semantic fidelity, motion realism, and visual clarity across diverse prompts.  
For example, in separate prompts such as \textit{"exploding"} and \textit{"descends gracefully"}, ANSE captures critical semantic transitions—generating visible explosions in the former and preserving smooth temporal continuity in the latter.  
In other prompts like \textit{"koala playing the piano"} and \textit{"tasting beer"}, it produces anatomically coherent bodies with natural, expressive motion.  
These examples highlight ANSE’s ability to improve spatial-temporal fidelity while generalizing effectively to large-scale video diffusion models.  
Additional qualitative results, including other backbones, are provided in Appendix~\ref{sec:add_example}.

\noindent\textbf{Computational Cost.}  
As shown in Table~\ref{tab:performance} and Table~\ref{tab:sota}, ANSE introduces only a minimal increase in inference time while delivering consistent quality gains.  
On AnimateDiff, inference time increases by just +10.98\%, compared to more than +105\% when combined with FreqPrior and over +200\% for FreeInit.  
On CogVideoX models, the overhead is similarly small: +8\% on the 2B variant and +13\% on the 5B variant.  
For more recent architectures, ANSE adds only +14\% on HunyuanVideo and +15\% on Wan2.1.  
The overhead comes only from seed evaluation and leaves the sampling process and memory usage unchanged.  
In contrast, FreeInit and FreqPrior require multiple full passes, causing large slowdowns.  
ANSE improves quality across diverse backbones while keeping inference overhead below +15\%, making it a practical plug-and-play solution for video diffusion.
\vspace{-0.5em}

% Additionally, ANSE can be combined with Freeinit or FreqPrior to further enhance performance without significant additional overhead.

% \noindent\textbf{Effect of Attention Placement.}  
% To analyze the contribution of BANSA across different attention types in AnimateDiff~\citep{animatediff}, we conduct ablations on self-, cross-, and temporal-attention modules, under both vanilla and FreqPrior~\citep{freqprior} settings. While Table~\ref{tab:performance} reports results for the full-layer configuration, this analysis isolates the effect of each component.
% As shown in Table~\ref{tab:table1}, applying BANSA to any individual attention type improves both quality and semantic scores. Specifically, self- and temporal-attention enhance the quality score (e.g., +0.9 in vanilla), indicating better visual fidelity and motion coherence. In contrast, applying BANSA to cross-attention boosts semantic scores (e.g., +1.4 with FreqPrior), suggesting improved text alignment. Combining BANSA across all attention layers yields the highest total score. These results demonstrate that ANSE is compatible with various sampling strategies and orthogonal to external priors.

\section{Ablation Study}

\noindent\textbf{Comparison of Acquisition Functions.}  
Using CogVideoX-2B, we compare BANSA with random sampling, entropy-based selection, and two variants: BANSA (B) with Bernoulli masking and BANSA (D) with dropout-based stochasticity (Table~\ref{tab:table2}).  
All methods improve over the baseline, but BANSA (B) consistently achieves the best scores across quality, semantic, and total metrics.  
This suggests that Bernoulli masking better captures attention-level uncertainty than dropout, underscoring the importance of modeling uncertainty in line with the model’s structure.

\noindent\textbf{Effect of Ensemble Size $K$.}  
We investigate how the number of stochastic forward passes $K$ influences subject and background consistency (Table~\ref{tab:table3}).  
Both metrics improve as $K$ increases from 1 to 10, indicating that larger ensembles provide more reliable noise evaluation.  
Performance saturates at $K=10$, which we set as the default in all experiments.

\noindent\textbf{Effect of Noise Pool Size $M$.}  
We analyze the effect of noise pool size $M$, which controls the diversity of candidate seeds in BANSA.  
Larger $M$ improves the chance of finding high-quality seeds but increases inference cost.  
As shown in Figure~\ref{fig:noise_pool} (Appendix), performance saturates around $M=10$ for CogVideoX-2B and -5B, which we adopt as the default for each model.

\noindent\textbf{Reversing the BANSA Criterion.}  
To further validate BANSA, we conduct a control experiment where the noise seed with the \emph{highest} BANSA score is selected—i.e., choosing the seed associated with the greatest model uncertainty.  
As shown in Table~\ref{tab:revesing_score}, this reversal results in degradation of quality-related metrics, confirming that lower BANSA scores are predictive of perceptually stronger generations and supporting the validity of our selection strategy.

\begin{table}[!t]
\centering

\begin{minipage}{\linewidth}
\begin{minipage}{0.48\linewidth}
\centering
\caption{\textbf{Comparison of different acquisition functions for noise selection.}}
\label{tab:table2}
\vspace{-0.5em}
\resizebox{0.9\linewidth}{!}{
\begin{small}
\begin{tabular}{l c c c}
\toprule
Method & Quality Score & Semantic Score & Total Score \\
\midrule
Random               & 82.08 & 76.83 & 81.03 \\
\cmidrule(r){1-1}\cmidrule(r){2-4}
Entropy & 82.23 & 76.73 & 81.13 \\
BANSA (D)        & 82.43 & 76.91 & 81.33 \\
\rowcolor{green!10} BANSA (B) & \bf 82.56 & \bf 78.06 & \bf 81.66 \\
\bottomrule
\end{tabular}
\end{small}
}
\end{minipage}
\hfill
% \vspace{0.5em}
\begin{minipage}{0.48\linewidth}
\centering
\caption{\textbf{Effect of varying the number of $K$.}}
\vspace{-0.5em}
\label{tab:table3}
\resizebox{0.9\linewidth}{!}{
\begin{small}
\begin{tabular}{c c c}
\toprule
$K$ & Subject Consistency & Background Consistnecy  \\
\cmidrule(r){1-1}\cmidrule(r){2-3}
1 & 0.9618 & 0.9788 \\
3 & 0.9623 & 0.9793 \\
5 & 0.9632 & 0.9798 \\
7 & 0.9638 & 0.9802 \\
\rowcolor{green!10} 10 & \bf0.9641 & \bf0.9811 \\
\bottomrule
\end{tabular}
\end{small}
}
\end{minipage}
\end{minipage}
\vspace{-0.5em}
\end{table}

\begin{table}[!t]
\centering
\caption{\textbf{Quantitative comparison of reversed BANSA scoring on CogVideoX-2B.} This presents results when selecting samples using the highest BANSA scores, compared to the default selection.}
\label{tab:revesing_score}
\vspace{-0.5em}
\resizebox{0.75\linewidth}{!}{
\begin{tabular}{lcccccccc}
\toprule
\makecell[c]{Method} & 
\makecell[c]{Subject\\Consistency} & 
\makecell[c]{Background\\Consistency} & 
\makecell[c]{Temporal\\Flickering} & 
\makecell[c]{Motion \\Smoothness} & 
\makecell[c]{Aesthetic \\Quality} & 
\makecell[c]{Imaging \\Quality} & 
\makecell[c]{Dynamic \\Degree} & 
\makecell[c]{Quality\\Score} \\
\cmidrule(r){1-1}\cmidrule(r){2-8}\cmidrule(r){9-9}
Vanilla       & 0.9616 &  0.9788 & 0.9715 & 0.9743 & 0.6195 & 0.6267 & 0.6380 & 82.08 \\
\cmidrule(r){1-1}\cmidrule(r){2-8}\cmidrule(r){9-9}
+ Ours (reverse)        & 0.9626 & 0.9785 & 0.9700 & 0.9741 & 0.6181 & 0.6253 & 0.6328 & 81.93 \\
\rowcolor{green!10}+ Ours          & \bf 0.9641 & \bf0.9811 & \bf 0.9775 & \bf 0.9746 & \bf 0.6202 & \bf 0.6276 & \bf 0.6511 & \bf 82.56 \\
\bottomrule
\end{tabular}%
}
\vspace{-0.5em}
\end{table}

\begin{table}[!t]
\centering
\caption{{\textbf{Component analysis of Bernoulli masking parameters on Wan2.1 1.3B.}}}
\vspace{-0.5em}
\resizebox{0.75\linewidth}{!}{
\begin{tabular}{l c c c c c c}
\toprule
\makecell[c]{Backbone\\Model} & 
\makecell[c]{Bernoulli $p$} & 
\makecell[c]{Subject\\Consistency $\uparrow$} & 
\makecell[c]{Motion \\Smoothness $\uparrow$} & 
\makecell[c]{Aesthetic \\Quality $\uparrow$} & 
\makecell[c]{Imaging \\Quality $\uparrow$} &  
\makecell[c]{Overall \\ Consistency $\uparrow$} \\ 
\cmidrule(r){1-1}\cmidrule(r){2-2}\cmidrule(r){3-7}
\multirow{5}{*}{Wan2.1 1.3B} 
  & 0.0 & 0.9254  &  0.9751 & 0.6464 & 0.6459 & 0.2731  \\
  \cmidrule(r){2-2}\cmidrule(r){3-7}
& 0.2 & \bf 0.9310 & \bf 0.9786 & \bf 0.6559 & 0.6516 & \bf 0.2763 \\
  \cmidrule(r){2-2}\cmidrule(r){3-7}
& 0.5 &  0.9308 & 0.9779 & 0.6543 & \bf 0.6533 & 0.2727 \\
  \cmidrule(r){2-2}\cmidrule(r){3-7}
& 0.7 & 0.9302 &   0.9778 & 0.6557 & 0.6529 & 0.2726 \\
  \cmidrule(r){2-2}\cmidrule(r){3-7}
& 1.0 &  0.9232 & 0.9753 & 0.6451 & 0.6508 & 0.2721 \\
\bottomrule
\end{tabular}
}
\vspace{-0.5em}
\label{tab:beru}
\end{table}

\paragraph{{Effect of Bernoulli masking probability $p$.}}
{We conduct a sensitivity analysis on the Bernoulli masking probability $p$ using the Wanx~2.1 (1.3B) model. As shown in Table~\ref{tab:beru}, both disabling masking ($p=0$) and fully random masking ($p=1$) degrade performance, indicating that the masking probability has a meaningful impact on the results. The method remains stable within the intermediate range $p \in [0.2, 0.7]$, where overall performance varies only slightly. Among these configurations, $p=0.2$ delivers the best performance, which supports our choice of the default setting.}

\begin{table}[!t]
\centering
\caption{{\textbf{Quantitative evaluation on the  IPOC-2B after post-training and  Wan2.1 14B model.} }}
\vspace{-0.5em}
\resizebox{0.75\linewidth}{!}{
\begin{tabular}{l l c c c c c c c}
\toprule
\makecell[c]{Backbone\\Model} & 
\makecell[c]{Method} & 
\makecell[c]{Subject\\Consistency $\uparrow$} & 
\makecell[c]{Background\\Consistency $\uparrow$} & 
\makecell[c]{Temporal\\Flickering $\uparrow$} & 
\makecell[c]{Motion \\Smoothness $\uparrow$} & 
\makecell[c]{Aesthetic \\Quality $\uparrow$} & 
\makecell[c]{Imaging \\ Quality $\uparrow$} &  
\makecell[c]{Dynamic \\ Degree $\uparrow$} \\ 
\cmidrule(r){1-1}\cmidrule(r){2-2}\cmidrule(r){3-9}
\multirow{2}{*}{IPOC-2B} 
  & Vanilla & 0.9273 &	0.9535 & 	0.9918 & 	0.9741& 	0.6554& 	0.664	& 0.7527
 \\
  \cmidrule(r){2-2}\cmidrule(r){3-9}
& + Ours  & \bf0.9281& 	\bf0.9534& \bf	0.9921& 	\bf0.9745& 	\bf0.6561& 	\bf0.6644& \bf	0.7721
 \\
\cmidrule(r){1-1}  \cmidrule(r){2-2}\cmidrule(r){3-9}
\multirow{2}{*}{Wan2.1 14B} 
  & Vanilla &0.9290  &	0.9587 &	0.9931	 & 0.9759	 & 0.6572	 & \bf 0.6552 & 0.5916
  \\
  \cmidrule(r){2-2}\cmidrule(r){3-9}
& + Ours  & \bf 0.9302 &	\bf 0.9596	& \bf 0.9938	& \bf 0.9772	& \bf 0.6601	& 0.6529	& \bf 0.6277
 \\
 
\bottomrule
\end{tabular}
}
\vspace{-1.5em}
\label{tab:strong}
\end{table}

\paragraph{{Evaluation on strong video backbones and post-RL models.}}
{We further evaluate our method on stronger settings, including the large-capacity Wan2.1~14B backbone and the post-trained IPOC-2B model. IPOC-2B is refined with a DPO-related preference alignment method~\citep{yang2025ipo}, making it a competitive post-RL baseline. As shown in Table~\ref{tab:strong}, our method consistently improves over the vanilla models across most VBench dimensions, demonstrating generalization to larger architectures and robustness under preference-aligned post-RL training.}

\paragraph{{User Study.}}
{We conducted a human preference study in which evaluators compared paired videos generated with and without our method along two criteria: overall quality and text–video alignment. As shown in Fig.~\ref{fig:user} (a), our method is consistently preferred on both aspects, indicating improvements in perceptual quality and semantic alignment. The details are provided in Appendix~\ref{sec:human}.}

\paragraph{{Analysis on correlation with actual quality.}}
{We examined how BANSA scores relate to quality by sampling diverse prompts and noise inputs and visualizing their score–quality trends in Fig.~\ref{fig:user}(b). Pearson correlations show strong negative relationships with Subject Consistency (–0.6672), Background Consistency (–0.7492), and Motion Smoothness (–0.6769), indicating that lower BANSA scores reliably associate with higher-quality and more stable generations.}

\begin{figure}[t!]
\centering
\includegraphics[width=1\linewidth]{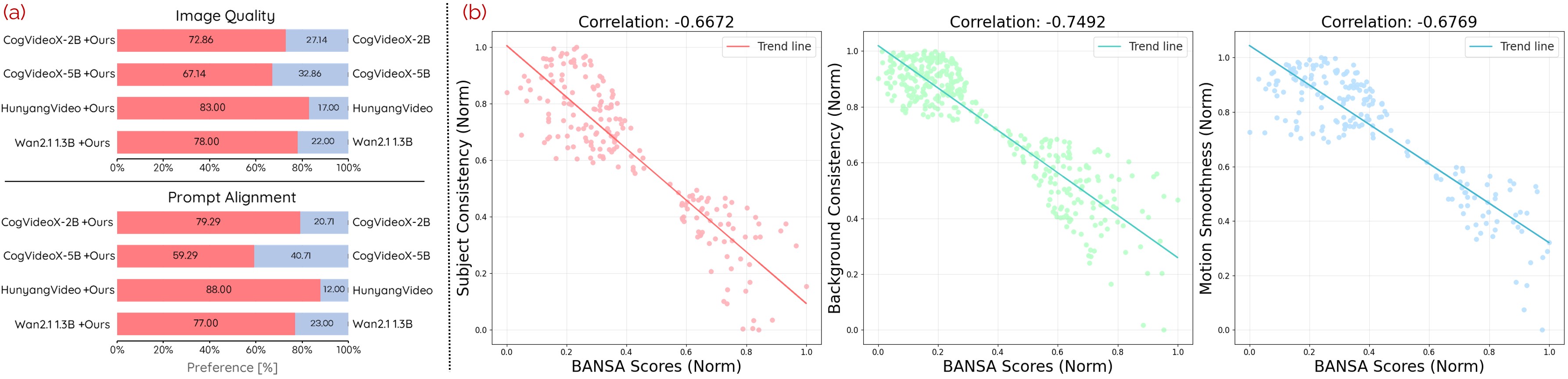}
\vspace{-1em}
\caption{{(a) User study showing consistent human preference for our method in overall quality and text–video alignment. (b) Correlation analysis where BANSA scores exhibit strong negative trends with key VBench metrics, indicating that lower scores correspond to higher-quality generations.}}
\label{fig:user}
\vspace{-0.5em}
\end{figure}

\begin{figure}[t!]
\centering
\includegraphics[width=0.75\linewidth]{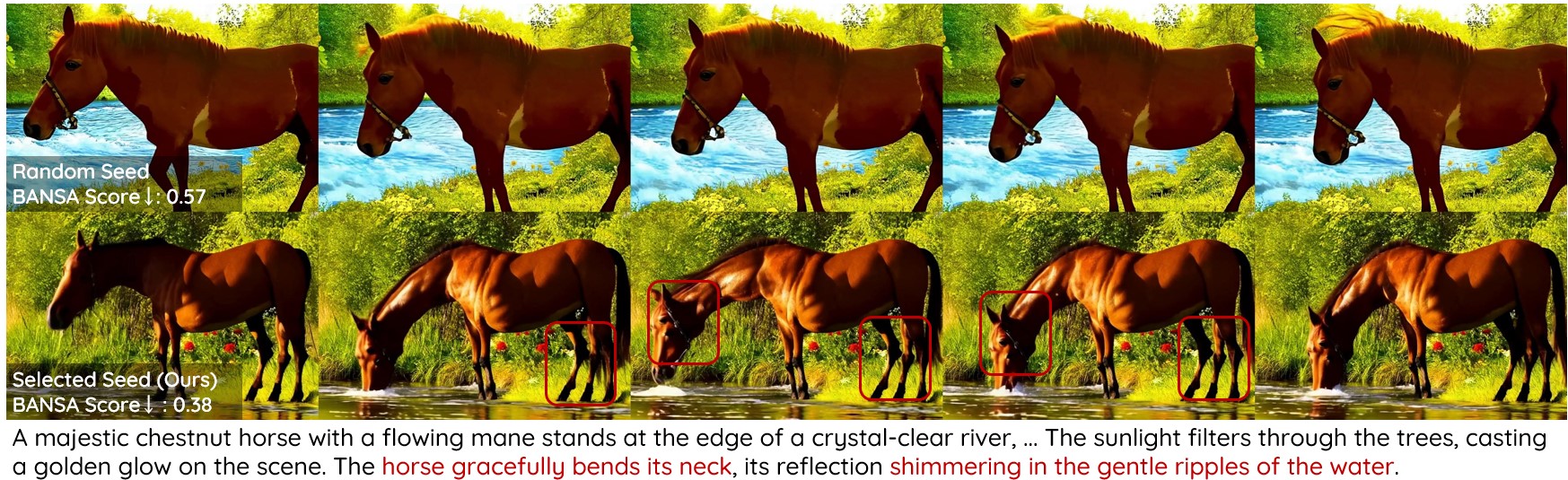}
\vspace{-0.5em}
\caption{\textbf{Failure case and limitation of our method.}
Although the BANSA score indicates low uncertainty, the resulting video still contains unnatural content. This represent a limitation of ours: we select optimal seeds but do not alter the generation process itself.}\label{fig:limit}
\vspace{-1em}
\end{figure}

\section{Discussion and Limitations}  
% Our method focuses on noise-seed selection via model uncertainty, but it has several limitations. As shown in Figure~\ref{fig:limit}, even low-BANSA seeds can still yield unnatural outputs because ANSE selects seeds without modifying the subsequent sampling process. Moreover, BANSA measures attention-level uncertainty, which does not fully reflect semantic or aesthetic quality. Evaluating multiple candidates per seed with strong quality metrics could reduce this gap, but the computational cost is prohibitive. We therefore view BANSA as a practical surrogate, and future work may incorporate more informative or adaptive selection strategies
% .
% \paragraph{Limitations and Future Directions.}
{
Our method focuses on noise–seed selection based on model uncertainty, which introduces several limitations. As shown in Figure~\ref{fig:limit}, even seeds with low BANSA scores may still produce unnatural results because ANSE does not modify the subsequent sampling process. Moreover, BANSA captures uncertainty at the attention level, which does not fully account for semantic or aesthetic quality. Although evaluating multiple candidates with stronger quality metrics could reduce this gap, such an approach is computationally prohibitive.  
A promising direction for future work is to integrate ANSE with post-training refinement methods—such as Self-Forcing (SF)~\citep{huang2025self}—that directly enhance sample quality. Notably, ANSE already works effectively with post-RL refinement methods like IPoC, and extending this compatibility to SF represents a natural next step. Such combinations may yield complementary gains in both quality and robustness beyond seed selection alone.
}

\section{Conclusion}
We present ANSE, a framework for active noise selection in video diffusion models. 
At its core is BANSA, an acquisition function that leverages attention-derived uncertainty to identify noise seeds yielding confident, consistent attention and thus higher-quality generations. 
BANSA adapts the BALD principle to the generative setting by operating in attention space, with efficient deployment enabled through Bernoulli-masked attention and lightweight layer selection. 
Experiments across multiple T2V backbones show that ANSE improves video quality and prompt alignment with minimal inference overhead. 
This introduces an inference-time scaling paradigm, enhancing generation not by altering the model or sampling steps, but through informed seed selection.

\section*{Acknowledgement}
This work was partially supported by Institute of Information $\&$ Communications Technology Planning $\&$ Evaluation (IITP) grant funded by the Korean government (MSIT) (Artificial Intelligence Graduate School Program (GIST)) (No. 2019-0-01842).

\newpage
\appendix
\setcounter{proposition}{0}
% \begin{center}
%     \Large \textbf{\thetitle}\\
%      {Supplementary Material}
% \end{center}

\section{Supplementary Section} \label{sec:ablation}
In this supplementary document, we present the following:
 
\begin{itemize} 
% \item Implementation details of the BANSA score in Section~\ref{sec:bansa_detail}.

\item Proof of Proposition 1 from the main paper regarding the BANSA Zero condition in Section~\ref{sec:bansa_proof}.

\item Implementation details of ANSE and evaluation metrics in Section~\ref{sec:metric_detail}.

\item Further explanation of layer selection through cumulative BANSA correlation in Section~\ref{sec:correlation_layer}.

\item Additional ablation studies, including full-layer BANSA score analysis (Section~\ref{sec:add_fulllayer}), temporal scope effects (Section~\ref{sec:add_temporal}), and attention-space effects (Section~\ref{sec:add_attn}).

\item {Additional analysis, including effect of CFG scale (Section~\ref{sec:cfg}), equivalent computation budget (Section~\ref{sec:budget}), statistical validation (Section~\ref{sec:static}), and user study protocol (Section~\ref{sec:human}).}

\item Additional qualitative results demonstrating the impact of BANSA Score in Section~\ref{sec:add_example}. \end{itemize}

\section{Proof of Proposition 1} \label{sec:bansa_proof}
\begin{proposition}[BANSA Zero Condition]
\label{appen-prop:bansa-zero}
Let $\mathcal{A}(\mbz, c, t) = \{A^{(1)}, \dots, A^{(K)}\}$ be a set of row-stochastic attention maps. Then:
\[
\text{BANSA}(\mbz, c, t) = 0 \quad \Leftrightarrow \quad A^{(1)} = \cdots = A^{(K)}.
\]
\end{proposition}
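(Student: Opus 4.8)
The plan is to recognize the BANSA score as a Jensen gap for the strictly concave Shannon entropy, evaluated row by row. First I would unpack the definition of $\mathcal{H}$: since each $A^{(k)}$ is row-stochastic, every row $A^{(k)}_{i\cdot}$ is a genuine probability distribution, so that $\mathcal{H}(A^{(k)}) = \frac{1}{N}\sum_{i=1}^{N} H\!\left(A^{(k)}_{i\cdot}\right)$, where $H(p) = -\sum_{j} p_j \log p_j$ is ordinary Shannon entropy. The averaged map $\bar{A} := \frac{1}{K}\sum_{k} A^{(k)}$ is again row-stochastic, with each row equal to the convex combination $\frac{1}{K}\sum_{k} A^{(k)}_{i\cdot}$. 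Substituting into the BANSA definition and exchanging the two finite sums, I would rewrite
\[
\text{BANSA}(\mbz, c, t) = \frac{1}{N}\sum_{i=1}^{N} \left[ H\!\left(\frac{1}{K}\sum_{k=1}^{K} A^{(k)}_{i\cdot}\right) - \frac{1}{K}\sum_{k=1}^{K} H\!\left(A^{(k)}_{i\cdot}\right)\right],
\]
exhibiting the whole quantity as an average over rows of a per-row Jensen gap.

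The key analytic input is strict concavity of $H$. The scalar map $f(x) = -x\log x$ satisfies $f''(x) = -1/x < 0$ on $(0,\infty)$, so it is strictly concave on $[0,\infty)$, and hence $H(p) = \sum_{j} f(p_j)$ is strictly concave on the probability simplex. By Jensen's inequality each bracketed term is nonnegative (which also re-derives $\text{BANSA}\ge 0$); more importantly, for a strictly concave function the equal-weight Jensen inequality is an equality if and only if all $K$ arguments coincide. Consequently the $i$-th bracket vanishes precisely when $A^{(1)}_{i\cdot} = \cdots = A^{(K)}_{i\cdot}$.

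For the equivalence itself, I would argue that $\text{BANSA}$ is a sum of nonnegative terms, so it equals zero if and only if every term is zero, if and only if for every row index $i$ the rows $A^{(k)}_{i\cdot}$ agree across all $k$, which is exactly $A^{(1)} = \cdots = A^{(K)}$; the reverse direction is immediate since identical maps make both entropy terms coincide. I expect the main obstacle to be stating the equality case of Jensen's inequality cleanly for $K > 2$ points and on the boundary of the simplex, where some entries are zero and $f''$ is undefined. I would handle this by establishing strict concavity of $f$ on the closed half-line $[0,\infty)$ directly in its Jensen form (a short check shows $f(\lambda x) - \lambda f(x) = -\lambda x \log \lambda > 0$ when one argument is $0$), or by a brief induction reducing the $K$-point case to the two-point case, so that a single coordinate with differing values already forces a strict gap.
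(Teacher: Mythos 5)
Your proposal is correct and follows essentially the same route as the paper's proof: both identify the BANSA score as a Jensen gap for the strictly concave Shannon entropy and invoke the equality case of Jensen's inequality. Your version is in fact more careful than the paper's one-line argument, since you make the row-wise decomposition explicit and address the equality case for $K>2$ points and on the boundary of the simplex, which the paper glosses over.
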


\begin{proof}

BANSA is defined as the difference between the average entropy and the entropy of the average:
\[
\text{BANSA}(\mbz, c, t) = \mathcal{H}\left( \frac{1}{K} \sum_{k=1}^{K} A^{(k)}(\mbz, c, t) \right) - \frac{1}{K} \sum_{k=1}^{K} \mathcal{H}(A^{(k)}(\mbz, c, t)).
\]
Since the Shannon entropy $\mathcal{H}(\cdot)$ is strictly concave over the probability simplex. Therefore, by Jensen’s inequality:
\[
\mathcal{H}\left( \frac{1}{K} \sum_{k=1}^{K} A^{(k)} \right) \geq \frac{1}{K} \sum_{k=1}^{K} \mathcal{H}(A^{(k)}),
\]
with equality if and only if $A^{(1)} = \cdots = A^{(K)}$. Thus, $\text{BANSA}(\mbz, c, t) = 0$ if and only if all attention maps are identical.
\end{proof}
\begin{remark}(Interpretation)
This result confirms that the BANSA score quantifies disagreement among sampled attention maps. A BANSA score of zero occurs only when all stochastic attention realizations collapse to a single deterministic map—i.e., the model exhibits \textbf{no epistemic uncertainty} in its attention distribution. Higher BANSA values indicate greater variation across samples, and thus, higher uncertainty. In this sense, BANSA acts as a Jensen–Shannon-type divergence over attention maps, capturing their dispersion under stochastic masking.
\end{remark}

\section{Further Details on Evaluation Metrics and Implementation} \label{sec:metric_detail}
\paragraph{Evaluation Metrics}
To evaluate performance on Vbench, we use the Vbench-long version, where prompts are augmented using GPT-4o across all evaluation dimensions. This version is specifically designed for assessing videos longer than 4 seconds.

We rigorously evaluate our generated videos following the official evaluation protocol. The Quality Score is a weighted average of the following aspects: subject consistency, background consistency, temporal flickering, motion smoothness, aesthetic quality, imaging quality, and dynamic degree.

The Semantic Score is a weighted average of the following semantic dimensions: object class, multiple objects, human action, color, spatial relationship, scene, appearance style, temporal style, and overall consistency.

The Total Score is then computed as a weighted combination of the Quality Score and Semantic Score:
\[
\text{Total Score} = \frac{w_1}{w_1 + w_2} \cdot \text{Quality Score} + \frac{w_2}{w_1 + w_2} \cdot \text{Semantic Score}
\]
where $ w_1 = 4 $ and $ w_2 = 1 $, following the default setting in the official implementation.

\paragraph{Implementation}  
We compute our BANSA score using the BALD-style formulation, which yields non-negative values. For clearer visualization, we normalize the BANSA scores from their original range (minimum: 0.45, maximum: 0.60) to the [0, 1] interval.  
This normalization is used solely for visual clarity in figures and plots, and does not affect the noise selection process, which operates on the raw BANSA scores.
{All models are evaluated using their default inference configurations: AnimateDiff (50 steps), CogVideo-X (50 steps), Wan2.1 (50 steps), and HunyuanVideo (50 steps).}

\section{Further Detail of BANSA Layer-wise Correlation Analysis}
\label{sec:correlation_layer}

\paragraph{Prompt construction.}  
We evenly sampled 100 prompts from the four official VBench categories: \textit{Subject Consistency}, \textit{Overall Consistency}, \textit{Temporal Flickering}, and \textit{Scene}. Each category contains 25 prompts, selected to ensure diversity in motion, structure, and semantics.

Below are representative examples:

\begin{itemize}
    \item \textit{Subject Consistency} (e.g., “A young man with long, flowing hair sits on a rustic wooden stool in a cozy room, strumming an acoustic guitar...”)
    \item \textit{Overall Consistency} (e.g., “A mesmerizing splash of turquoise water erupts in extreme slow motion, each droplet suspended in mid-air...”)
    \item \textit{Temporal Flickering} (e.g., “A cozy restaurant with flickering candles and soft music. Patrons dine peacefully as snow falls outside...”)
    \item \textit{Scene} (e.g., “A university campus transitions from lively student life to a golden sunset behind the clock tower...”)
\end{itemize}

Prompt sampling was stratified to ensure coverage of diverse visual and temporal patterns. The full list of prompts will be made publicly available upon code release.

\begin{table}[!t]
\centering
% 두 번째 테이블 (오른쪽)
\begin{minipage}{0.49\linewidth}
\centering
\vspace{0.5em}
\includegraphics[width=1\linewidth]{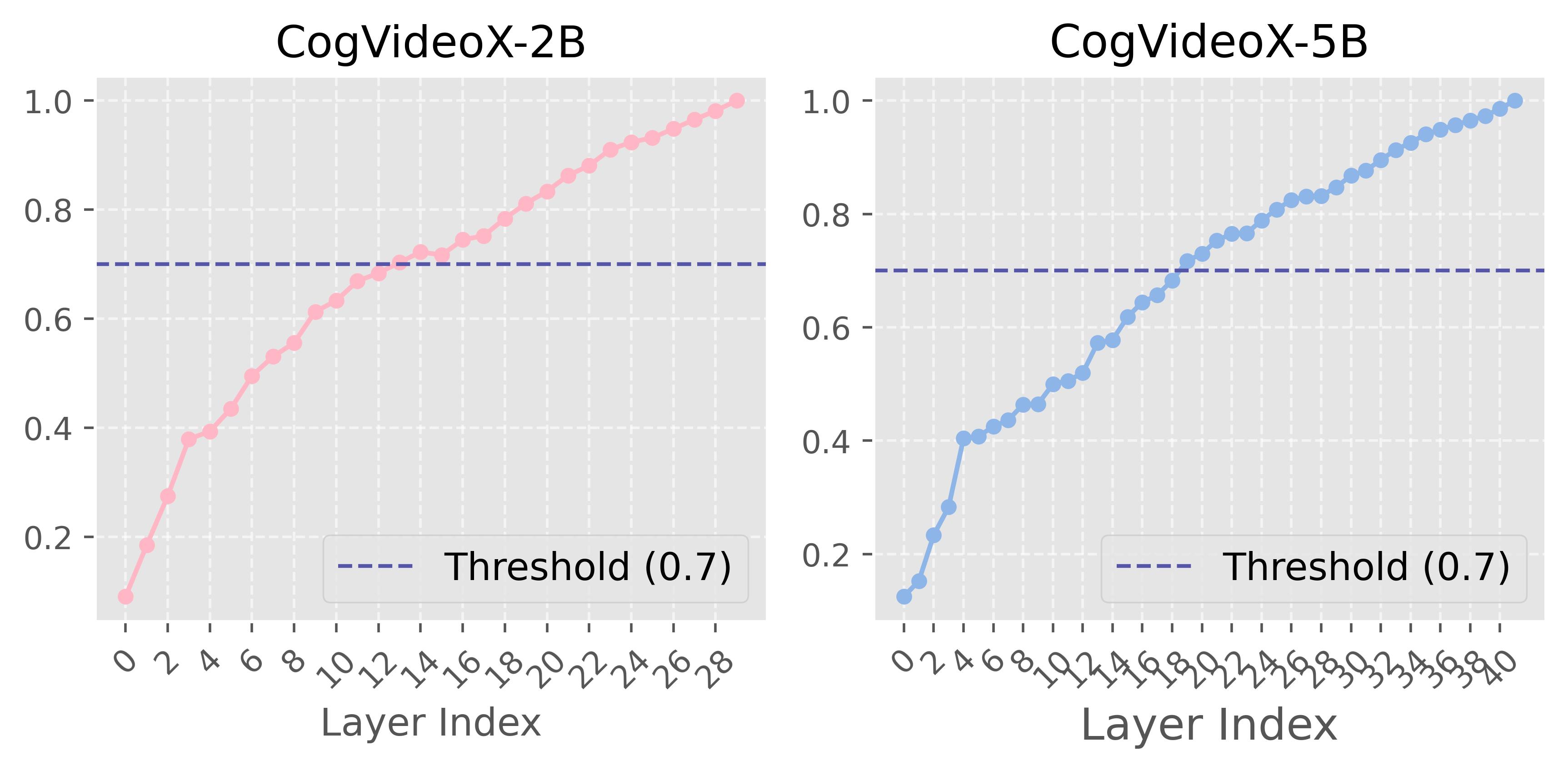}
\captionof{figure}{
\textbf{Correlation analysis between cumulative BANSA score and full-layer scores.} 
The 0.7 threshold is reached around layer 14 for CogVideoX-2B and layer 19 for CogVideoX-5B.
 }\label{fig:layer_corr}
 % The Pearson correlation with full-layer BANSA score saturates 
 % around layer 14 (CogVideoX-2B), and 19 (CogVideoX-5B), guiding efficient early stopping for each model.}
\end{minipage}
\hfill
\begin{minipage}{0.49\linewidth}
\centering
\vspace{0.5em}
\includegraphics[width=1\linewidth]{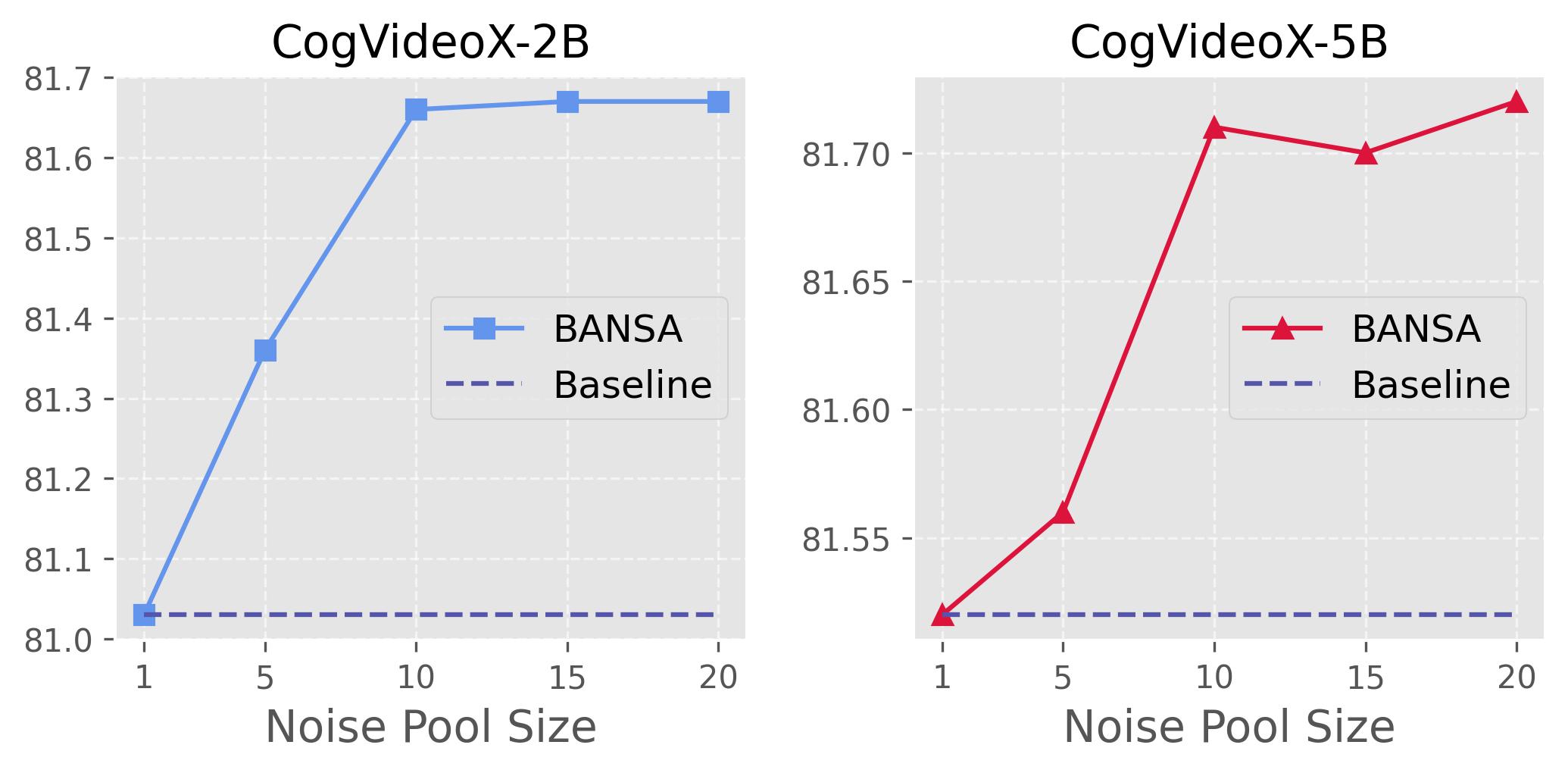}
% \vspace{-0.5em}
\captionof{figure}{\textbf{Ablation study on noise pool size $M$.} We evaluate total scores across three text-to-video diffusion models with varying $M$, and select suitable values based on computational cost.}\label{fig:noise_pool}
\end{minipage}
\vspace{-1em}
\end{table}

\paragraph{BANSA score computation and correlation analysis.}  
For each prompt, we generated 10 videos using different random noise seeds and computed BANSA scores at each attention layer. This yielded one full-layer BANSA score and a set of layer-wise scores per seed.
To obtain stable estimates, we averaged the per-layer and full-layer BANSA scores across the 10 seeds, reducing noise-specific variance and capturing consistent uncertainty patterns.

We then computed Pearson correlations between the cumulative BANSA scores (summed from layer 1 to $d$) and the official quality scores. The optimal depth $d^*$ was defined as the smallest $d$ at which the correlation exceeded 0.7, a widely used threshold indicating a strong positive relationship. As shown in Figure~\ref{fig:layer_corr}, this point was reached at $d^*=14$ for CogVideoX-2B and $d^*=19$ for CogVideoX-5B. For AnimateDiff, the correlation crossed the threshold earlier at $d^*=10$, while for HunyuanVideo and Wanx~2.1 it consistently appeared around $d^*=20$. These model-specific depths were applied throughout all experiments. Such consistency across architectures empirically supports our choice of the 0.7 threshold and ensures that the correlation analysis reflects generalizable, noise-agnostic patterns of attention-based uncertainty.

\begin{table}[h]
\centering
\caption{Comparison between full-layer and truncated BANSA score.}
\vspace{0.5em}
\resizebox{\linewidth}{!}{
\begin{tabular}{l l c c c c c c c c c}
\toprule
\makecell[c]{Backbone\\Model} & 
\makecell[c]{Method} & 
\makecell[c]{Subject\\Consistency $\uparrow$} & 
\makecell[c]{Background\\Consistency $\uparrow$} & 
\makecell[c]{Temporal\\Flickering $\uparrow$} & 
\makecell[c]{Motion \\Smoothness $\uparrow$} & 
\makecell[c]{Aesthetic \\Quality $\uparrow$} & 
\makecell[c]{Imaging \\Quality $\uparrow$} & 
\makecell[c]{Dynamic \\Degree $\uparrow$} & 
\makecell[c]{Quality\\Score $\uparrow$} &
\makecell[c]{Inference\\Time $\downarrow$} \\
\cmidrule(r){1-1}\cmidrule(r){2-2}\cmidrule(r){3-9}\cmidrule(r){10-10} \cmidrule(r){11-11}
\multirow{2}{*}{CogvideoX-2B} 
  & Full-layer & 0.9639 & 0.9810 & \bf 0.9801 & 0.9743 & 0.6198 & 0.6244 & \bf 0.6516 & \bf 82.58  & 303.7 \\
& Truncated  & \bf 0.9641 & \bf0.9811 &  0.9775 & \bf 0.9746 & \bf 0.6202 & \bf 0.6276 &  0.6511 &  82.56 & \bf269.3\\
\cmidrule(r){1-1}\cmidrule(r){2-2}\cmidrule(r){3-9}\cmidrule(r){10-10} \cmidrule(r){11-11}
\multirow{2}{*}{CogvideoX-5B} 
  & Full-layer &  \bf0.9660	& 0.9630 & \bf0.9863 &	0.9708	& 0.6168 &	0.6290	& \bf0.6979 & \bf82.71 & 810.3 \\
  & Truncated &  0.9658	& \b0.9639 & 0.9861 &	\bf0.9711	& \bf0.6179 &	\bf0.6290	& 0.6918 & 82.70 & \bf754.3 \\
\bottomrule
\end{tabular}
}
\label{tab:trunc_vs_full}
\end{table}

\section{Effectiveness of Truncated BANSA Score} \label{sec:add_fulllayer} 
To reduce the computational overhead of BANSA evaluation, we adopt a truncated score that aggregates attention uncertainty only up to a fixed depth $d^*$, rather than summing over all layers. To evaluate the effectiveness of this approximation, we compared the final generation quality when selecting noise seeds using either the full-layer or truncated BANSA scores.

As shown in Table~\ref{tab:trunc_vs_full}, both approaches yield highly similar results across all seven dimensions of the VBench evaluation protocol (subject consistency, background consistency, aesthetic quality, imaging quality, motion smoothness, dynamic degree, and temporal flickering). Importantly, the overall quality scores are preserved despite the substantial reduction in attention layers used.

This demonstrates that truncated BANSA is sufficient to capture the key uncertainty signals for reliable noise selection while reducing inference time. The strong alignment in quality stems from the fact that our method relies on relative ranking rather than absolute values, allowing for efficient yet robust selection with significantly lower computational cost. We attribute this effectiveness to the fact that most informative attention behaviors emerge early in the denoising process, allowing accurate uncertainty estimation without full-layer computation.

\begin{table}[h]
\centering
\caption{Effect of temporal scope in BANSA score on generation quality.}
\vspace{0.5em}
\resizebox{\linewidth}{!}{
\begin{tabular}{ l c c c c c c c c c}
\toprule
\makecell[c]{BANSA Scope} & 
\makecell[c]{Subject\\Consistency $\uparrow$} & 
\makecell[c]{Temporal\\Flickering $\uparrow$} & 
\makecell[c]{Motion \\Smoothness $\uparrow$} & 
\makecell[c]{Aesthetic \\Quality $\uparrow$} & 
\makecell[c]{Imaging \\Quality $\uparrow$} & 
\makecell[c]{Dynamic \\Degree $\uparrow$} & 
\makecell[c]{Inference\\Time $\downarrow$} \\
\cmidrule(r){1-1}\cmidrule(r){2-2}\cmidrule(r){3-8}\cmidrule(r){9-9} 
1-step & 0.9639 & \bf 0.9801 & 0.9743 & 0.6198 & 0.6244 & \bf 0.6516 & $\times$ 1 \\
25-step avg  & 0.9651 &  0.9798 & 0.9746 &  0.6202 &  0.6271 &  0.6511 & $\times$ 25 \\
50-step avg  & \bf 0.9652 &  0.9799 & \bf 0.9751 & \bf 0.6203 & \bf 0.6276 &  0.6514 & $\times$ 50 \\
\bottomrule
\end{tabular}
}
\label{tab:bansa_timestep_ablation}
\end{table}

\section{Effect of Temporal Scope in BANSA Score}
\label{sec:add_temporal}

While our method computes the BANSA score only at the first denoising step to minimize cost, it is natural to ask whether incorporating more timesteps improves its predictive power for noise selection. To investigate this, we compute the average BANSA score across the first 1, 25, and 50 denoising steps and compare their effectiveness in predicting video quality.

Table~\ref{tab:bansa_timestep_ablation} reports the VBench scores for subject consistency, aesthetic quality, imaging quality, motion smoothness, dynamic degree, and temporal flickering when using BANSA computed over different temporal scopes. Although using more timesteps results in slightly better quality, the gains are marginal. This indicates that most of the predictive signal for noise quality is embedded early in the generation trajectory.

More importantly, since BANSA is used solely to assess the uncertainty of the initial noise seed—not to track full-step generation behavior—our 1-step computation is sufficient to capture the core uncertainty signal. In contrast, computing BANSA over all steps requires running multiple attention forward passes across the full trajectory, resulting in substantial computational overhead that limits its practicality for real-world applications.

\begin{table}[h]
\centering
\caption{
\textbf{Quantitative results on VBench using AnimateDiff, CogVideoX-2B and -5B.}  
}
\label{tab:attn}
\resizebox{0.9\linewidth}{!}{
\begin{small}
\begin{tabular}{c c c c c }
\toprule
\textbf{ Backbone Model} & \textbf{Method} & \textbf{Quality Score} & \textbf{Semantic Score} & \textbf{Total Score}  \\
\cmidrule(r){1-1}\cmidrule(r){2-2}\cmidrule(r){3-5}
\multirow{3}{*}{\makecell[c]{AnimateDiff\\ \citep{animatediff}}}
    & Vanilla         & 80.22 & 69.03 & 77.98\\
   \cmidrule(r){2-2}\cmidrule(r){3-5}
    & \cellcolor{green!10} + Ours (Self-Attention)  & \cellcolor{green!10} \bf81.68 & \cellcolor{green!10} \bf70.72 & \cellcolor{green!10} \bf 79.19 \\
   \cmidrule(r){2-2} \cmidrule(r){3-5}
    & \cellcolor{green!10} + Ours (Cross-Attention) & \cellcolor{green!10} \bf81.66 & \cellcolor{green!10} \bf71.09 & \cellcolor{green!10} \bf 79.33  \\
\bottomrule
\end{tabular}
\end{small}
}
\vspace{-1em}
\end{table}

\section{Effect of Attention Space in BANSA Score}
\label{sec:add_attn}
To further examine whether BANSA scores capture quality- or semantic-related uncertainty, we conducted additional experiments with the AnimateDiff backbone, which separates self- and cross-attention layers. Unlike CogVideo and Wanx, which adopt an MMDiT architecture where multimodal attention entangles both components, AnimateDiff provides a clearer lens for disentangling semantic effects. As shown in Table~\ref{tab:attn}, applying BANSA to self-attention primarily improved perceptual quality scores, whereas applying it to cross-attention enhanced semantic alignment. This indicates that uncertainty over noise manifests differently depending on the attention type, supporting the interpretation that BANSA reflects both quality- and semantics-related variations. In contrast, MMDiT-based models such as CogVideo and Wanx integrate these dimensions within their unified attention structure, making them more suitable for overall noise selection in the main paper’s experiments.

\begin{table}[h]
\centering
\caption{{\textbf{Ablation study on the interaction between higher CFG scale and ours on Wan2.1(1.3B).}}}
\vspace{0.5em}
\resizebox{0.9\linewidth}{!}{
\begin{tabular}{l l c c c c c}
\toprule
\makecell[c]{Backbone\\Model} & 
\makecell[c]{Method} & 
\makecell[c]{CFG \\ Scale} & 
\makecell[c]{Subject\\Consistency $\uparrow$} & 
\makecell[c]{Motion\\Smoothness $\uparrow$} & 
\makecell[c]{Aesthetic\\Quality $\uparrow$} & 
\makecell[c]{Imaging\\Quality $\uparrow$} \\  
\cmidrule(r){1-1}\cmidrule(r){2-2}\cmidrule(r){3-7}
\multirow{4}{*}{Wanx 2.1 1.3B} 
 & Vanilla & 5.0 & 0.9285 & 0.9748 & 0.6465 & \bf 0.6519 \\
\cmidrule(r){2-2}\cmidrule(r){3-7}
 & Ours & 5.0 & \bf 0.9310 & \bf 0.9786 & \bf 0.6559 & 0.6516 \\
\cmidrule(r){2-2}\cmidrule(r){3-7}
 & Vanilla & 7.5 & 0.9266 & 0.9740 & 0.6447 & 0.6357 \\
\cmidrule(r){2-2}\cmidrule(r){3-7}
 & Ours & 7.5 & \bf 0.9276 & \bf 0.9755 & \bf 0.6519 & \bf 0.6409 \\
\bottomrule
\end{tabular}
}
\label{tab:cfg}
\end{table}

\section{{Effect of CFG Scale on Our Method}}\label{sec:cfg}

{Classifier-free guidance (CFG) modifies the conditional--unconditional
interpolation only at the final stage of inference, whereas our method selects
noise seeds at initialization based on attention-level uncertainty. Since they
operate at different points in the pipeline, their effects are largely
independent: CFG adjusts the extrapolation strength, while our method chooses
seeds that lead to more stable early-stage behavior.}

{To assess potential interactions, we increased the CFG scale from the
default value of 5.0 to 7.5 on the Wanx 2.1 (1.3B) model, as shown in
Table~\ref{tab:cfg}. Higher CFG values generally degrade performance by pushing
sampling beyond the model’s calibrated range. Nevertheless, our method
consistently outperforms the vanilla baseline under both settings, indicating
that its benefits remain stable across different CFG scales.}

\begin{table}[h]
\centering
\caption{{Ablation study under an equivalent computation budget. Increasing the baseline sampling steps does not consistently improve performance, whereas our method achieves better results with fewer effective steps.}}
\vspace{0.5em}
\resizebox{0.9\linewidth}{!}{
\begin{tabular}{l l c c  c c c}
\toprule
\makecell[c]{Backbone\\Model} & 
\makecell[c]{Method} & 
\makecell[c]{Steps} & 
\makecell[c]{Subject\\Consistency $\uparrow$} & 
\makecell[c]{Motion \\Smoothness $\uparrow$} & 
\makecell[c]{Aesthetic \\Quality $\uparrow$} & 
\makecell[c]{Imaging \\Quality $\uparrow$} \\  
\cmidrule(r){1-1}\cmidrule(r){2-2}\cmidrule(r){3-7}
\multirow{3}{*}{Wan2.1 1.3B} 
 & Vanilla  & 50 & 0.9285  & 0.9748 & 0.6465 & \bf 0.6519  \\
\cmidrule(r){2-2}\cmidrule(r){3-7}
 & Vanilla & 60 & 0.9274  & 0.9745 & 0.6476 & 0.6492  \\
 \cmidrule(r){2-2}\cmidrule(r){3-7}
  & Ours & $<$ 60 & \bf 0.9310  & \bf 0.9786 & \bf 0.6559 & 0.6516  \\
\bottomrule
\end{tabular}
}
\label{tab:budget}
\end{table}

\section{{Effect of Equivalent Computation Budget}}\label{sec:budget}

{Our method evaluates $M=10$ noise candidates, but the actual computation is
lower than $50+10$ steps because many candidates terminate early during partial
denoising. To assess whether the baseline could benefit from additional
computation, we increased its sampling steps from 50 to 60. As shown in
Table~\ref{tab:budget}, this results in only marginal changes and even
degrades some metrics, likely because the model is not calibrated for longer
sampling schedules.}

{In contrast, our method consistently improves performance while requiring
fewer effective steps than the 60-step baseline, suggesting that the gains stem
from selecting more stable noise initializations rather than from additional
compute. This highlights the efficiency advantage of our approach under
equivalent or lower computation budgets.}

\begin{table}[h]
\centering
\caption{{Comparison of VBench Quality Scores With and Without BANSA.
Scores are reported as mean (95$\%$ confidence interval, CI: lower – upper).
}}
\vspace{-0.5em}
\resizebox{\linewidth}{!}{
\begin{tabular}{l l c c c c c c c}
\toprule
\makecell[c]{Backbone\\Model} & 
\makecell[c]{Method} & 
\makecell[c]{Subject\\Consistency $\uparrow$ (CI)} & 
\makecell[c]{Background\\Consistency $\uparrow$ (CI)} & 
\makecell[c]{Temporal\\Flickering $\uparrow$ (CI)} & 
\makecell[c]{Motion \\Smoothness $\uparrow$ (CI)} & 
\makecell[c]{Aesthetic \\Quality $\uparrow$ (CI)} & 
\makecell[c]{Imaging \\Quality $\uparrow$ (CI)} & 
\makecell[c]{Dynamic \\Degree $\uparrow$ (CI)} \\
\cmidrule(r){1-1}\cmidrule(r){2-2}\cmidrule(r){3-9}
\multirow{2}{*}{CogvideoX-2B} 
  & Baseline &  0.9616 (0.9606 - 0.9626) &	0.9788 (0.9779 - 0.9797)	& 0.9715 (0.9688 - 0.9742) & 0.9738 (0.9733 - 0.9753)	& 0.6195 (0.6116 - 0.6274)& \bf 0.6267 (0.6160 - 0.6374)	& 0.6380 (0.6133 - 0.6627) \\
& + Ours & \bf 0.9639 (0.9629 - 0.9649) & \bf 0.9810 (0.9795 - 0.9825)	 & \bf 0.9801 (0.9781 - 0.9821) & \bf 0.9743 (0.9737 - 0.9756)	& \bf 0.6198 (0.6119 - 0.6277) & 0.6244 (0.6137 - 0.6351) & \bf 0.6500 (0.6255 - 0.6745)
\\
\cmidrule(r){1-1}\cmidrule(r){2-2}\cmidrule(r){3-9}
\multirow{2}{*}{CogvideoX-5B} 
  & Baseline &  0.9616 (0.9608 - 0.9624) &	0.9616 (0.9602 - 0.9630) &	0.9862 (0.9833 - 0.9891)&	\bf 0.9712 (0.9703 - 0.9721)&	0.6162 (0.6084 - 0.6240)	&0.6272 (0.6170 - 0.6374) &	0.6895 (0.6655 - 0.7135)
 \\
  & + Ours & \bf  0.9660 (0.9653 - 0.9667)	& \bf 0.9630 (0.9616 - 0.9644)	& \bf 0.9863 (0.9835 - 0.9891) &	0.9708 (0.9700 - 0.9716)	& \bf 0.6168 (0.6089 - 0.6247) &	\bf 0.6290 (0.6189 - 0.6391)	& \bf 0.6979 (0.6738 - 0.7220)
 \\
\bottomrule
\end{tabular}
}\label{tab:sc1}
\end{table}

\begin{table}[h]
\centering
\caption{{Comparison of VBench Semantic Scores With and Without BANSA.
Scores are reported as mean (95$\%$ confidence interval, CI: lower – upper).
}}
\vspace{-0.5em}
\resizebox{\linewidth}{!}{
\begin{tabular}{l l c c c c c c c c c}
\toprule
\makecell[c]{Backbone\\Model} & 
\makecell[c]{Method} & 
\makecell[c]{Object \\Class $\uparrow$ (CI)} & 
\makecell[c]{Multiple \\Object $\uparrow$ (CI)} & 
\makecell[c]{Color $\uparrow$(CI)} & 
\makecell[c]{Spatial \\ Relationship $\uparrow$( (CI)} & 
\makecell[c]{Scene $\uparrow$ (CI)} & 
\makecell[c]{Temporal \\ Style $\uparrow$ (CI)} & 
\makecell[c]{Overall \\ Consistency $\uparrow$ (CI)} & 
\makecell[c]{Human \\ Action $\uparrow$ (CI)} & 
\makecell[c]{Appearance  \\ Style$\uparrow$ (CI)} \\ 
\cmidrule(r){1-1}\cmidrule(r){2-2}\cmidrule(r){3-11}
\multirow{2}{*}{CogvideoX-2B} 
  & Baseline & 0.8522 (0.8511 - 0.8533)	& 0.6391 (0.6376 - 0.6406)	& \bf 0.8318 (0.8302 - 0.8334)	& \bf 0.7148 (0.6963 - 0.7333) &	0.4815 (0.4801 - 0.4829) &	0.2488 (0.2453 - 0.2523) &	0.2722 (0.2679 - 0.2765)	& 0.9860 (0.9743 - 0.9977) &	0.2497 (0.2483 - 0.2511)
 \\
& + Ours &\bf  0.8842 (0.8831 - 0.8853)	& \bf 0.6596 (0.6581 - 0.6611)&	0.8255 (0.8240 - 0.8270)	&0.6915 (0.6728 - 0.7102)	& \bf 0.5286 (0.5272 - 0.5300) &	\bf 0.2547 (0.2511 - 0.2583)&	\bf 0.2760 (0.2718 - 0.2802)	& \bf0.9870 (0.9757 - 0.9963)	&\bf 0.2507 (0.2493 - 0.2521)

\\
\cmidrule(r){1-1}\cmidrule(r){2-2}\cmidrule(r){3-11}
\multirow{2}{*}{CogvideoX-5B} 
  & Baseline &  0.8595 (0.8585 - 0.8605) &	0.6511 (0.6496 - 0.6526)	& \bf 0.8218 (0.8202 - 0.8234)	& \bf 0.6872 (0.6675 - 0.7069)&	0.5233 (0.5219 - 0.5247) &	0.2552 (0.2513 - 0.2591) &	0.2761 (0.2717 - 0.2805) &	0.9900 (0.9813 - 0.9987) &	0.2483 (0.2354 - 0.2612)
 \\
  & + Ours & \bf 0.8675 (0.8665 - 0.8685)	& \bf 0.6658 (0.6643 - 0.6673)	&0.8201 (0.8186 - 0.8216) &	0.6831 (0.6637 - 0.7025) &	\bf0.5255 (0.5241 - 0.5269)	& \bf 0.2585 (0.2546 - 0.2624)	& \bf 0.2773 (0.2730 - 0.2816)	& \bf0.9980 (0.9910 - 1.0050) &	\bf0.2524 (0.2511 - 0.2537)

 \\
\bottomrule
\end{tabular}
}\label{tab:sc2}
\end{table}

\begin{table}[h]
\centering
\caption{{Comparison of VBench Quality Scores for HunyuanVideo and Wan2.1.
Scores are reported as mean (95$\%$ confidence interval, CI: lower – upper).
}}
\vspace{-0.5em}
\resizebox{\linewidth}{!}{
\begin{tabular}{l l c c c c c c}
\toprule
\makecell[c]{Backbone\\Model} & 
\makecell[c]{Method} & 
\makecell[c]{Subject\\Consistency $\uparrow$ (CI)} & 
\makecell[c]{Background\\Consistency $\uparrow$ (CI)} & 
\makecell[c]{Motion\\Smoothness $\uparrow$ (CI)} & 
\makecell[c]{Aesthetic\\Quality $\uparrow$ (CI)} & 
\makecell[c]{Imaging\\Quality $\uparrow$ (CI)} & 
\makecell[c]{Temporal\\Flickering $\uparrow$ (CI)} \\
\cmidrule(r){1-1}\cmidrule(r){2-2}\cmidrule(r){3-8}

\multirow{2}{*}{\makecell[c]{Hunyuan\\Video}} 
  & Vanilla & 0.9562 (0.9551 – 0.9573) & 0.9656 (0.9645 – 0.9667) & 0.9850 (0.9838 – 0.9862) & \bf 0.6276 (0.6198 – 0.6354) & 0.6137 (0.6059 – 0.6215) & 0.9921 (0.9902 – 0.9940) \\
& + Ours & \bf 0.9612 (0.9603 – 0.9621) & \bf 0.9661 (0.9650 – 0.9672) & \bf 0.9858 (0.9847 – 0.9869) & 0.6268 (0.6191 – 0.6345) & \bf 0.6151 (0.6075 – 0.6227) & \bf 0.9938 (0.9919 – 0.9957)
\\
\cmidrule(r){1-1}\cmidrule(r){2-2}\cmidrule(r){3-8}

\multirow{2}{*}{\makecell[c]{Wan2.1}} 
  & Vanilla & 0.9562 (0.9553 – 0.9571) & 0.9656 (0.9644 – 0.9668) & 0.9850 (0.9839 – 0.9861) & \bf 0.6276 (0.6201 – 0.6351) & 0.6137 (0.6061 – 0.6213) & 0.9943 (0.9924 – 0.9962) \\
& + Ours & \bf 0.9612 (0.9601 – 0.9623) & \bf 0.9661 (0.9649 – 0.9673) & \bf 0.9858 (0.9846 – 0.9870) & 0.6268 (0.6190 – 0.6346) & \bf 0.6151 (0.6073 – 0.6229) & \bf 0.9956 (0.9935 – 0.9977)
\\
\bottomrule
\end{tabular}
}\label{tab:sc3}
\end{table}

\section{{Statistical Validation of Results}} \label{sec:static}

{Our VBench evaluation uses 4,750 videos (five samples per prompt) across 17 fine-grained dimensions, where even small absolute gains are meaningful. To assess statistical significance, we computed 95\% confidence intervals using 10 independent runs, each based on random subsets of 100 videos. As shown in Tables~\ref{tab:sc1}, \ref{tab:sc2}, and \ref{tab:sc3}, our method consistently improves or maintains performance across quality-, semantic-, and motion-related metrics without exhibiting bias toward any particular subset. These statistical results provide strong evidence of the robustness and reliability of our approach.}

\section{{User Study Protocol}} \label{sec:human}

{
As presented in Fig.~\ref{fig:user} (a), we conduct a human evaluation study to complement automated metrics and directly assess whether ANSE improves video quality and prompt coherence. We evaluate four representative video diffusion models—CogVideoX-2B, CogVideoX-5B, Wan~2.1, and HunyuanVideo—by comparing their outputs with and without ANSE.
To construct a fair evaluation protocol, we randomly sample 30 prompts from the \textit{Subject Consistency} and \textit{Overall Consistency} dimensions of the VBench benchmark. For each prompt, we generate paired videos using identical noise seeds for both the baseline and the ANSE-equipped models.
}

{
We recruit 12 independent human evaluators, all of whom remain fully blind to the models and are restricted to participating only once. Each evaluator is shown two videos side-by-side (baseline vs.\ ANSE) and responds to two questions:
\begin{enumerate}
    \item \textbf{Video Quality:} ``Which video appears more stable and visually pleasing?''
    \item \textbf{Prompt Alignment:} ``Which video better represents the given text prompt?''
\end{enumerate}
The order of prompts and the arrangement of video pairs are fully randomized. Across all backbones and prompts, models equipped with ANSE are consistently preferred in terms of both video quality and prompt alignment, demonstrating the practical effectiveness and robustness of ANSE.
}

\section{Additional Qualitative Comparison} \label{sec:add_example}
\paragraph{More qualitative results.}  
Figures~\ref{fig:fig_animate}, ~\ref{fig:fig_supple_1}, \ref{fig:fig_supple_2} ~\ref{fig:fig_hun}, and ~\ref{fig:fig_wan} present additional examples generated using our noise selection framework including diverse backbone such as, AnimateDiff, CogVideoX2B and 5B, HunyangVideo, and Wan2.1. Across diverse prompts, the selected seeds yield improved spatial detail, aesthetic quality, and semantic alignment, further validating the robustness of our approach. These examples complement our quantitative findings by illustrating the visual impact of BANSA-based noise selection.

\paragraph{Effect of BANSA score on generation quality.}  
Figures~\ref{fig:fig_supple_3} provide a qualitative comparison of outputs generated using three types of noise seeds: a randomly sampled seed, the seed with the highest BANSA score (lowest quality), and the seed with the lowest BANSA score (highest quality). All videos were generated using 50 denoising steps with the CogVideoX-5B backbone. The lowest-BANSA seed consistently produces sharper, more coherent, and semantically faithful videos, whereas the highest-BANSA seed often leads to structural artifacts or temporal instability. These results highlight the practical value of BANSA-guided noise selection.

\newpage

\begin{figure}[t!]
\centering
\includegraphics[width=\linewidth]{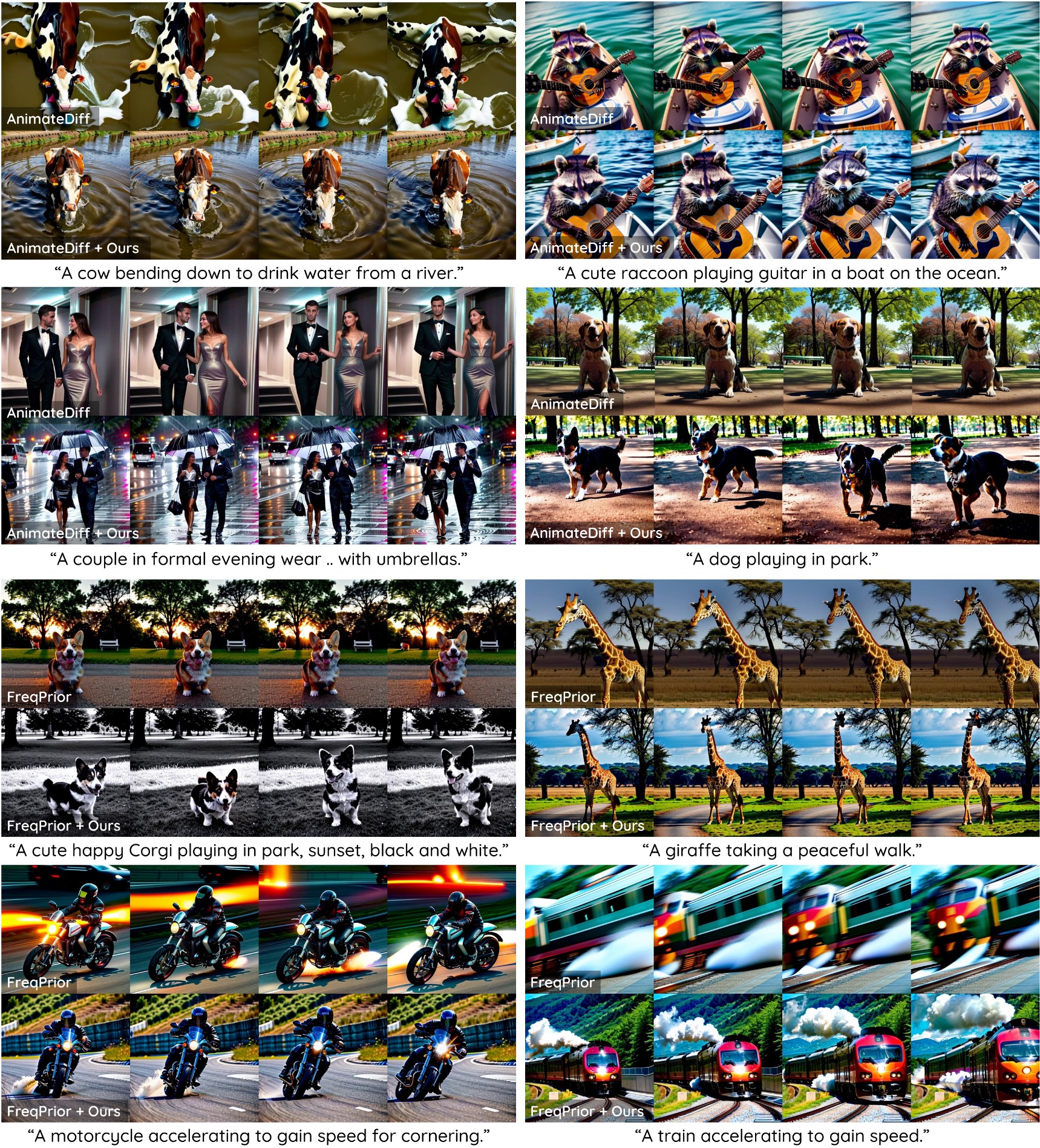}
% \vspace{-0.5em}
\caption{
\textbf{Effect of ANSE on semantic fidelity and motion stability in AnimateDiff and FreqPrior.}  
Each block compares baseline generations with those using ANSE-selected noise.  
In addition, while FreqPrior serves as a noise-refinement baseline, our method is fully compatible and achieves further improvements when combined.
}\label{fig:fig_animate}
\vspace{-1em}
\end{figure}

\begin{figure}[t!]
\centering
\includegraphics[width=\linewidth]{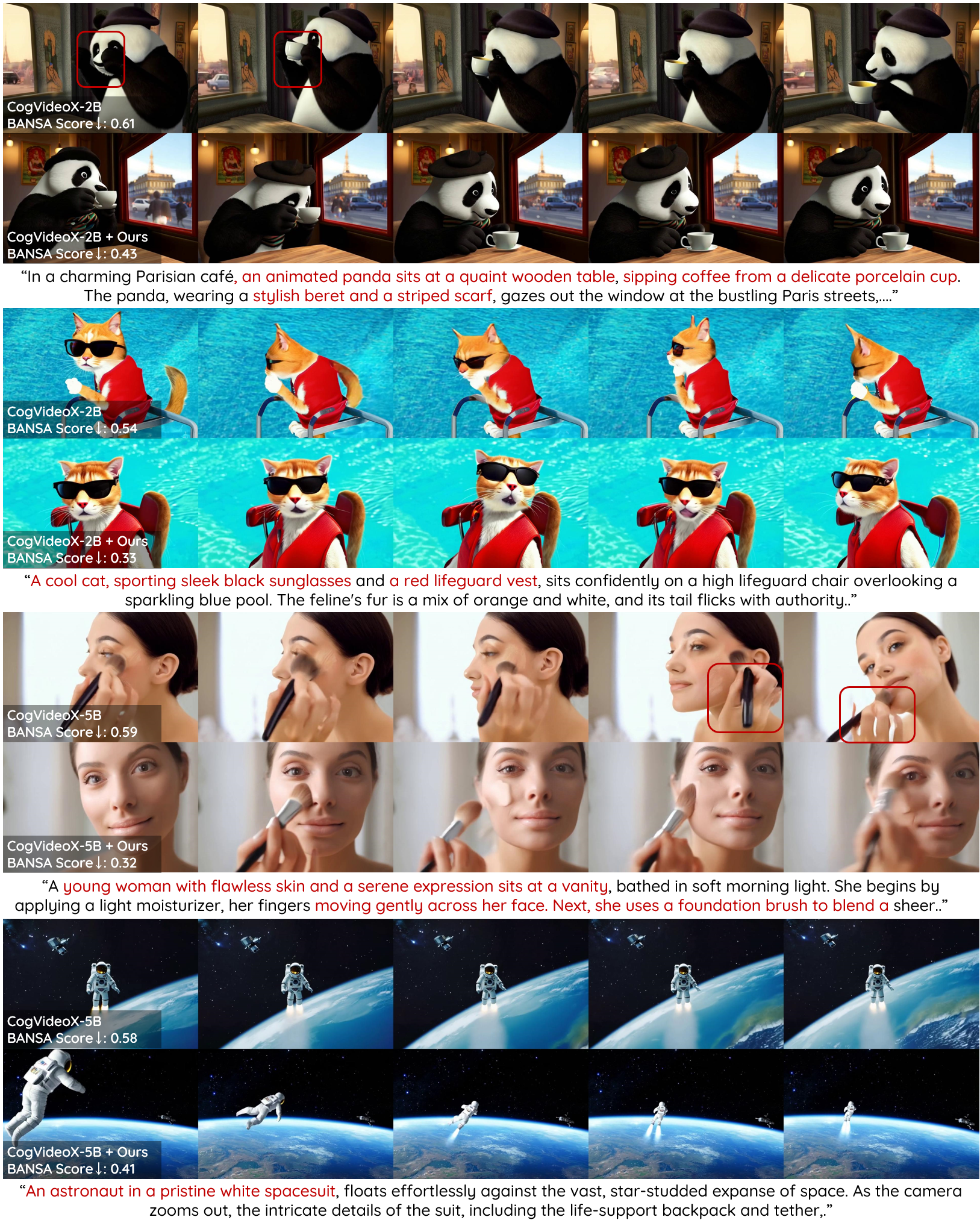}
% \vspace{-0.5em}
\caption{
\textbf{Effect of ANSE on semantic fidelity and motion stability in CogVideoX outputs.}  
Each block compares baseline generations with those using ANSE-selected noise. Across both CogVideoX-2B and 5B, ANSE improves semantic alignment to the prompt and reduces artifacts such as temporal flickering and object distortion. {White numbers show BANSA scores for the same prompt.}
}\label{fig:fig_supple_1}
\vspace{-1em}
\end{figure}

\begin{figure}[t!]
\centering
\includegraphics[width=\linewidth]{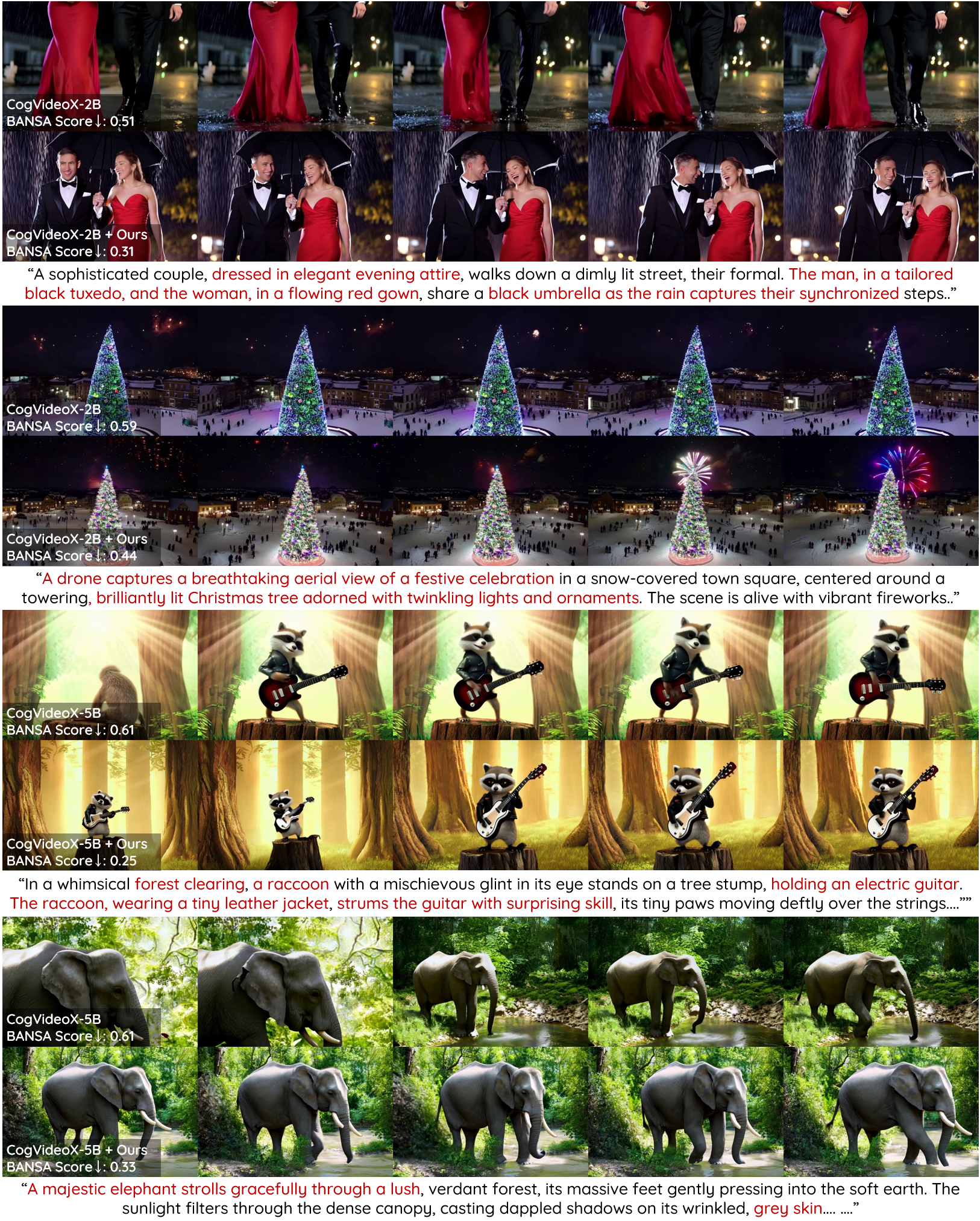}
% \vspace{-0.5em}
\caption{
\textbf{Additional qualitative comparison of CogVideoX variants with and without ANSE.}  
Results from CogVideoX-2B are shown in the first two rows; the rest show CogVideoX-5B. With ANSE, videos exhibit improved visual quality, better text alignment, and smoother motion transitions compared to the baseline. {White numbers show BANSA scores for the same prompt.}
}\label{fig:fig_supple_2}
\vspace{-1em}
\end{figure}

\begin{figure}[t!]
\centering
\includegraphics[width=\linewidth]{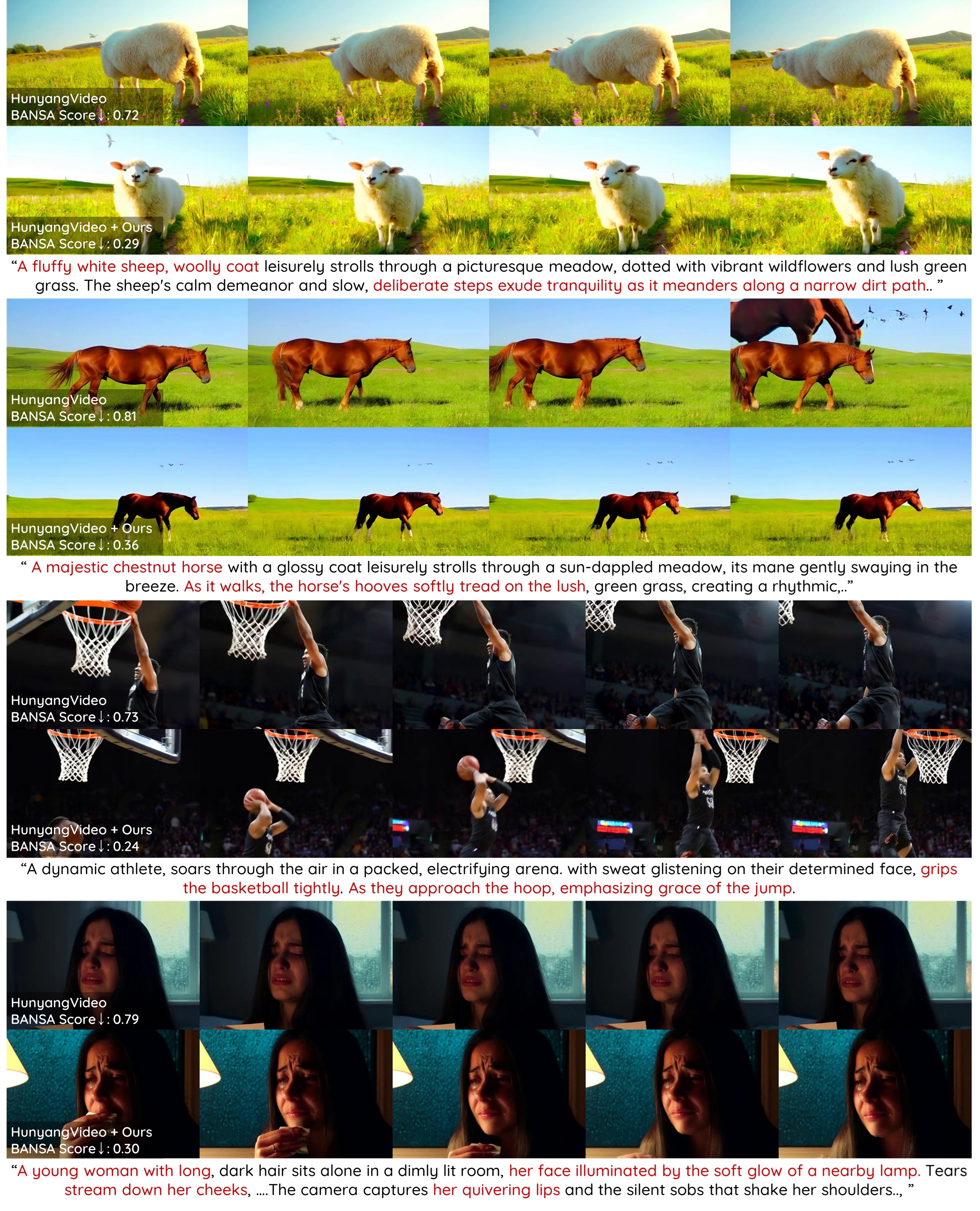}
% \vspace{-0.5em}
\caption{
\textbf{Additional qualitative comparison of HunyangVideo with and without ANSE.}  
With ANSE, the generated videos achieve sharper visual fidelity, stronger alignment between text and content, and smoother motion progression than the baseline. {White numbers show BANSA scores for the same prompt.}
}\label{fig:fig_hun}
\vspace{-1em}
\end{figure}

\begin{figure}[t!]
\centering
\includegraphics[width=\linewidth]{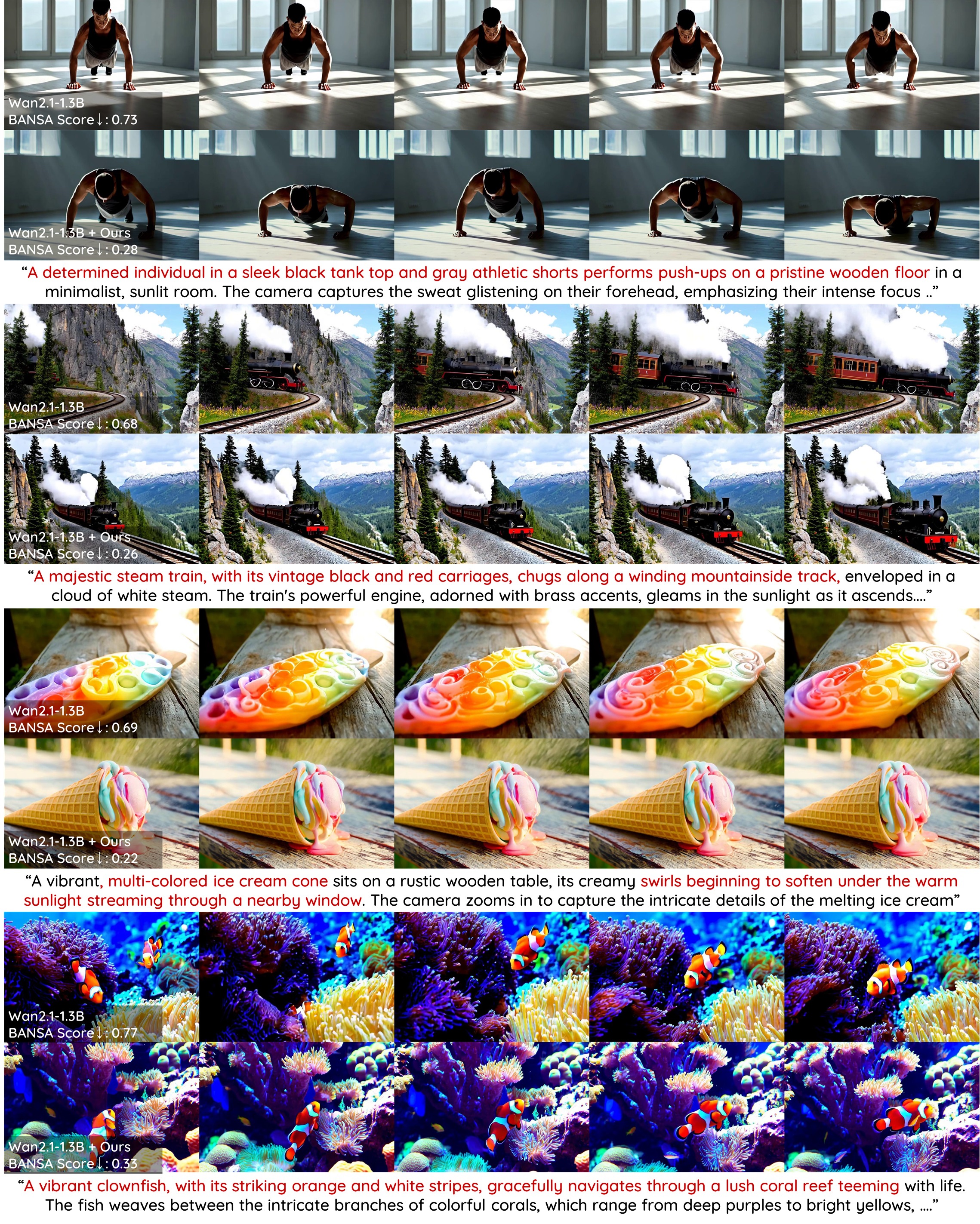}
% \vspace{-0.5em}
\caption{
\textbf{Additional qualitative comparison of Wan2.1 with and without ANSE. }  
ANSE enables videos to deliver higher visual quality, more accurate adherence to the given text, and more seamless motion dynamics compared to the baseline. {White numbers show BANSA scores for the same prompt.}
}\label{fig:fig_wan}
\vspace{-1em}
\end{figure}

\begin{figure}[t!]
\centering
\includegraphics[width=\linewidth]{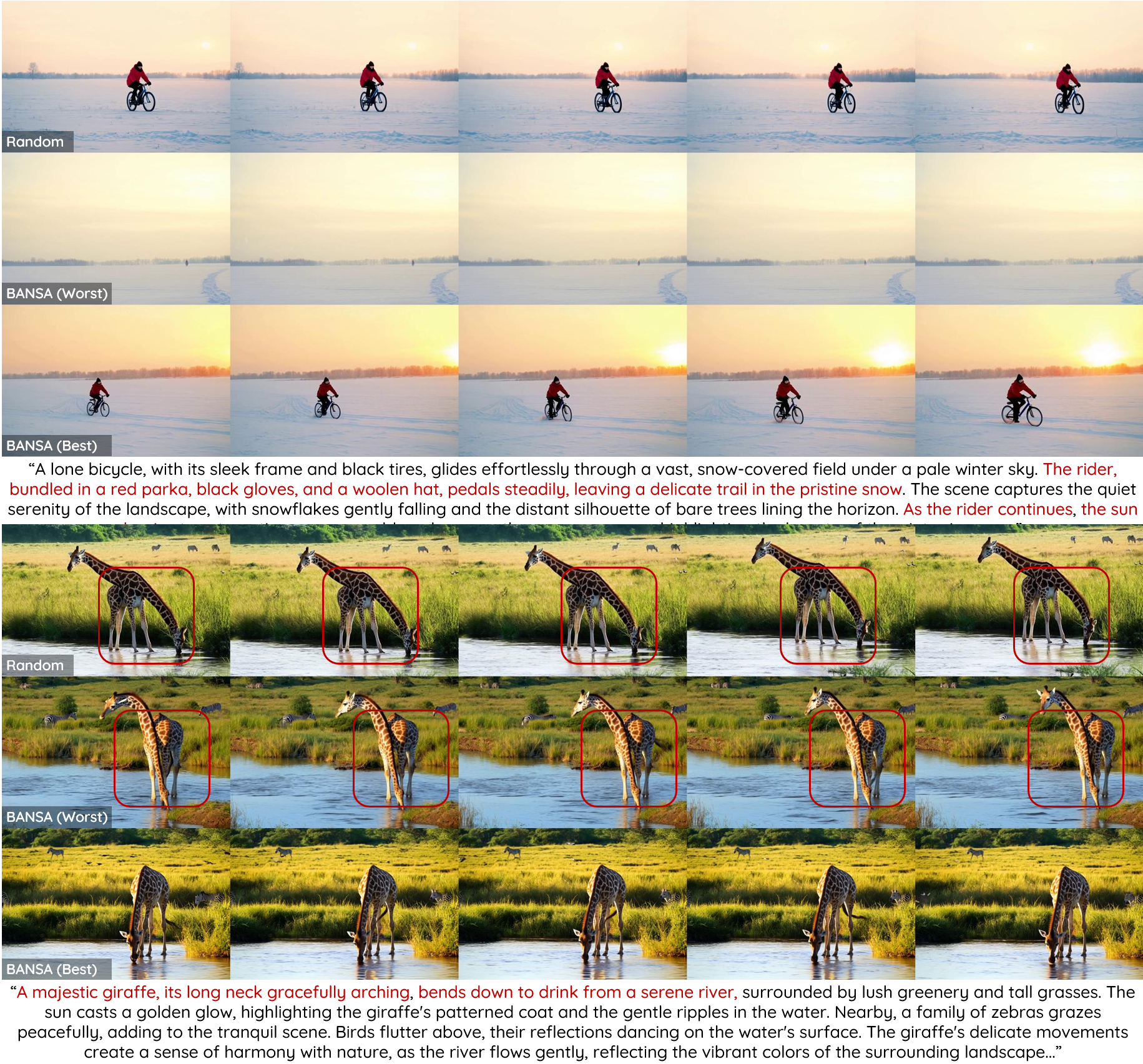}
% \vspace{-0.5em}

\caption{
\textbf{Qualitative comparison of generations from different noise seeds.}  
We compare outputs generated from a randomly sampled seed (top), the seed with the highest BANSA score (middle), and the seed with the lowest score (bottom), using the same prompt and model. BANSA-selected seeds produce more coherent structure, stable motion, and stronger semantic alignment than both random and high-uncertainty seeds.
}
\label{fig:fig_supple_3}
\vspace{-1em}
\end{figure}

\end{document}